\titleformat*{\section}{\large\bfseries}
\titleformat*{\subsection}{\bfseries}
\DeclareMathOperator{\sign}{sign}
\newcommand{\bs}{\boldsymbol}
\newcommand{\mc}{\mathcal}
\newtheorem{theorem}{Theorem}
\newtheorem{corollary}{Corollary}
\newtheorem{lemma}{Lemma}
\newtheorem{definition}{Definition}
\author{Evzenie Coupkova \thanks{ecoupkov@purdue.edu}, Mireille Boutin\thanks{mboutin@purdue.edu}}
\title{On the Rashomon ratio of infinite hypothesis sets}
\date{}
\begin{document}
\maketitle
\vspace{-0.85in}

\section*{abstract}
Given a classification problem and a family of classifiers, the Rashomon ratio measures the proportion of classifiers that yield less than a given loss. Previous work has explored the advantage of a large Rashomon ratio in the case of a finite family of classifiers.
Here we consider the more general case of an infinite family. We show that a large Rashomon ratio guarantees that choosing the classifier with the best empirical accuracy among a random subset of the family, which is likely to improve generalizability, will not increase the empirical loss too much. We quantify the Rashomon ratio in two examples involving infinite classifier families in order to illustrate situations in which it is large. In the first example, we estimate the Rashomon ratio of the classification of normally distributed classes using an affine classifier. In the second, we obtain a lower bound for the Rashomon ratio of a classification problem with Gram matrix $\bs{H}^\infty$ when the classifier family consists of two-layer ReLU neural networks. 
In general, we show that the Rashomon ratio can be estimated using a training dataset along with random samples from the classifier family and we provide guarantees that such an estimation is close to the true value of the Rashomon ratio.

\section{Introduction}
Given a classification problem and a family of classifiers for this problem, the Rashomon set consists of the functions in the family that yield a loss less than a given $\gamma>0$. Introduced by Breiman \cite{breiman2003statistical}, the Rashomon set framework allows one to focus on an entire set of good classifiers, rather than a single best or optimal classifier. Classification problems with a relatively large Rashomon set have interesting properties. For example, if the Rashomon set of a large model family can be shown to intersect with a family of simple models, then a good yet simple classifier can easily be found. This idea was extensively studied in \cite{semenova_rudin} in the context of seeking interpretable classification models.

The Rashomon ratio measures the proportion of functions within the family that are in the Rashomon set. Previous work \cite{semenova_rudin} 
has explored the use of the Rashomon ratio of a finite family of classifiers. We seek to extend these results to the general case of an infinite family of classifiers equipped with a probability measure. 

The first part of this paper is focused on the problem of estimating the Rashomon ratio in this more general case. The key to dealing with infinite classifier families is to choose a probability measure to model the classifier sampling process, realizing that the value of the Rashomon ratio depends on this choice. In some cases, it is possible to compute the value of the Rashomon ratio (of a given classification problem with a given classifier family and a chosen probability measure on the classifier family) analytically. For cases that cannot be handled analytically, we show how one can estimate the Rashomon ratio using a Monte-Carlo method where both the functions of the family and the training samples are picked at random (Theorem \ref{empirical_sampled_ratio_close_to_true_ratio}). 

Seeking examples featuring a demonstrably large Rashomon set, we compute the Rashomon ratio in the case of affine classifiers applied to binary classification problems where the class-conditional densities are Gaussians in $d$ dimensions with identity covariance  matrices. Since the classification obtained with an affine classifier is unchanged when the parameters of the classifier are multiplied by a positive constant, we identify the set of affine classifiers with the points of a sphere in ${\mathbb R}^{d+1}$. We then put a uniform metric on the sphere (though other choices are possible.) We show, analytically, that the Rashomon ratio in this case approaches $1$ as the distance between the Gaussians approaches infinity, reaching a strictly positive minimum value somewhere in-between. Numerical experiments indicate that this minimum value is obtained when the means of the Gaussians are at a distance approximately two from each other.  The minimum value depends on the dimension $d$. For example, when $d=1$ and $d=2$, the minimum value is approximately $0.36$ and  $0.18$, respectively, which is quite substantial.

In another example, we look at the use of a randomly initialized two-layer neural network with ReLU activation functions for classification. To capture the geometry of the classification problem, we look at the Gram matrix $\bs{H}^{\infty}$ and labeling vector $\bs{y}$ associated to the training data. The generalization and optimization properties of the trained classifiers in such a setup were studied in \cite{arora2019finegrained}; we build upon their work to obtain properties of the Rashomon set and the Rashomon ratio. 
We obtain a lower bound on the empirical Rashomon ratio, which we then relate to the true Rashomon ratio. 
The bound depends on the dimension of the data, the number of nodes in the hidden layer of the neural network, the smallest eigenvalue of $\bs{H}^{\infty}$ and $\bs{y}^T(\bs{H}^{\infty})^{-1}\bs{y}$. 
We illustrate this result using the Iris data set as a training set. 
We compute $\bs{y}^T(\bs{H}^{\infty})^{-1}\bs{y}$ and use it in the formula for the lower bound of the Rashomon ratio of a two-layer neural network. The lower bound for the Rashomon ratio in this particular setup is of the order of $10^{-10}$, which is encouraging when compared to the results from \cite{semenova_rudin} where larger Rashomon ratios were of the order of $10^{-39}$. A potentially higher value could be obtained by using a tighter bound.  


In the second part of the paper, we explore how a large Rashomon ratio for a family of classifiers $\mathcal{F}_2$ can be used to guarantee that
the restriction to a subset of classifiers $\mathcal{F}_1 \subset \mathcal{F}_2$, which is likely to improve the generalizability of the trained classifier, will not too severely affect its accuracy. More specifically, we look at the case where $\mathcal{F}_1$ is chosen from $\mathcal{F}_2$ at random following the probability measure on $\mathcal{F}_2$. We quantify the relation between the number of samples drawn, the Rashomon ratio and the probability that $\mathcal{F}_1$ intersects with the Rashomon set (Theorem \ref{theorem_6}), which allows us to relate the empirical accuracy of the best classifier within $\mathcal{F}_1$ to the true accuracy  of the best classifier within $\mathcal{F}_2$ (Theorem \ref{theorem_5}). 
Thus a good classifier can be obtained simply by picking the best classifier (i.e., the one with the best empirical performance) among a small set of randomly chosen ones.

We look at the implications of these theorems for the Rashomon ratio estimates obtained in the first part of the paper. More specifically, we substitute our estimates of the Rashomon ratio (or its lower bound) in Theorem \ref{theorem_6} to see how large a random subset has to be in order to satisfy the assumptions of Theorem \ref{theorem_5}; this way, one can expect a similar accuracy as when using the whole class of functions (before random sampling) while benefiting from the low population loss of a smaller subset. 
In the case of the affine classifiers applied to a mixture of Gaussians, our conclusions relate to the TARP method (thresholding after random projection) \cite{boutin_coupkova}. In that case, the value of the Rashomon ratio provides a clear understanding of the number of random projections required in order to obtain a population loss within a certain range of the optimal population loss.
In the case of ReLU networks, we go back to the Iris dataset example and use the lower bound on the Rashomon ratio previously obtained to obtain an upper bound on the number of randomly generated ReLU networks needed to obtain an accuracy comparable to that of the entire ReLU network family. Again, this bound could be decreased if a tighter bound for the Rashomon ratio was obtained.

\section{Related work}

As mentioned previously, our work builds on previous work by Semenova et al. \cite{semenova_rudin}, but our motivation is different: instead of showing that a large Rashomon ratio is related to the existence of accurate but simple models, we seek to show that a large Rashomon ratio allows for less training when learning. Our results concern an extreme case where the training is reduced to picking classifiers at random. This improves the generalization error (as explored in \cite{boutin_coupkova}) and reduces the computational cost. A large Rashomon ratio may provide guarantees for other learning methods that use random projection such as Random-projection Ensemble Classification in \cite{cannings2017randomprojection}. 
In the field of Neural Networks there is work that shows that a particular learning problem can be solved accurately with an untrained neural network.  Two such examples are visual number sense and face detection in \cite{kim2021visual} and \cite{baek2021face}. This phenomenon may be related to the size of the Rashomon set with respect to a particular probability measure. But we are not aware of any work exploring that relationship.

The Rashomon ratio is also studied in \cite{semenova2023a}. For example, it is shown there that in the case of a dataset with a binary feature matrix, the Rashomon ratio is larger for decision trees of smaller depth. The authors also explore how the Rashomon ratio changes when noise is added to the data. In particular, they show that adding noise to the normally distributed error of the ridge regression problem increases the value of the Rashomon ratio. In our work we explore how adding noise to the data changes the Rashomon ratio in the case of normally distributed data (i.e., by changing the signal to noise ratio through increasing the distance between the means of the Gaussian). We  also explore adding noise to the distribution that generates classifiers for the case of two-layered neural networks applied to a dataset with a given Gram matrix (i.e., by changing the parameter $\kappa$).

The Rashomon set is studied in other works. For example, in \cite{fisher2019all}, the Rashomon set is used to explore the importance of variables and other statistics of an entire set of good models. In \cite{d2022underspecification}, multiple examples of underspecification in modern machine learning models possibly connected to a large Rashomon sets are presented in fields such as computer vision, medical imaging and natural language processing. Close to optimal models are used in \cite{srebro2010smoothness} to provide a tighter generalization bound. Different models within the Rashomon set can have conflicting predictions, as discussed in \cite{madras2019detecting}.

Our focus in this paper is different: we are showing that it is possible to find a model with good accuracy at random as long as the Rashomon set is large. Showing that the Rashomon ratio is large may also have implications for methods which do not necessarily involve randomness. All of the results of this work apart from Theorem \ref{theorem_6} hold for an arbitrary restriction of a large hypothesis set $\mc{F}$ to a smaller subset $\mc{F}_1\subset \mc{F}$. Therefore these results may be useful for methods such as early stopping (\cite{raskutti2014early}), dropout (\cite{srivastava2014dropout}) or enforcing sparsity (\cite{liu2017learning}).

A slightly modified version of this paper was published as a Chapter in a Thesis \cite{mythesis}.

\section{Preliminaries}
\subsection{Notation}
Let us consider a learning problem on $d$-dimensional data $\bs{x} \in \mathbb{R}^d$ by a family of classifiers $\mathcal{F}$. Training is based on a labeled set $\mathcal{S} = \{(\bs{x}_i,y_i)\}_{i=1}^n$ that consists of $n$ sample points, where $\bs{x}_i \in \mathbb{R}^d$ and $y_i\in \mathbb{R}$ and each pair $(\bs{x}_i, y_i)$ is drawn i.i.d. from the distribution $\mathcal{D}$ with probability measure $\nu(\bs{x},y)$.

We define the population loss (true risk) of this learning problem as
\begin{align*}
    L(f) = \int_{\bs{x} \in \mathbb{R}^d, y\in \mathbb{R}} l(f(\bs{x}), y)\nu(d\bs{x},dy),
\end{align*}
and the empirical loss (training risk) of this learning problem as 
\begin{align*}
    \hat{L}(f) =  \frac{1}{n}\sum_{i=1}^{n}l(f(\bs{x}_i), y_i),
\end{align*}
where $l$ is a loss function, for example a 0/1 loss: $l(f(\bs{x}),y)=\mathbbm{1}(f(\bs{x})\neq y)$.

Learning is done over a hypothesis set $\mc{F}$. We denote the VC dimension of the family $\mc{F}$ by $d_{VC}(\mc{F})$. In this work we consider using a smaller subset of $\mc{F}$ drawn from $\mc{F}$ at random. We work with a probability space $(\mc{F}, \sigma(\mc{F}), \rho)$, where $\mc{F}$ is a set of functions that we use for learning, $\sigma(\mc{F})$ is a sigma algebra on $\mc{F}$ and $\rho$ is a measure on ${\sigma(\mc{F})}$.

Most often we consider a case where there exists an equivalence between $\mathcal{F}$ and Euclidean space $\mathbb{R}^{d}$ for some $d\in \mathbb{N}_{+}$. In this particular case the sigma algebra $\sigma(\mc{F})$ is the $\sigma$-algebra of Lebesgue measurable subsets of $\mathbb{R}^d$. Another case is when $\mc{F}$ is a finite set. In this case we consider a $\sigma$-algebra that consists of all subsets of $\mc{F}$.

We form a finite subset of $N$ functions $\{f_1,...,f_N\}$ that are drawn independently from $\mc{F}$ according to the probability measure $\rho$.

We use bold font to denote vectors or matrices. For a matrix $\bs{A}$ and for certain numbers $i,j \in \{1,...,n\}$ $\bs{A}_{ij}$ stands for the entry on $i$-th row and $j$-th column of the matrix $\bs{A}$. We denote Euclidean distance by $\|.\|$ and the Frobenius norm by $\|.\|_F$.

\subsection{Rashomon set, Rashomon ratio}
Let us consider the framework from \cite{semenova_rudin}.  In classification tasks, we implicitly look for a model - i.e.~a function from $\mathcal{F}$ - that has a small loss in some sense. The Rashomon set $\mathcal{R}_{\text{set}}(\gamma)$ is the set of classifiers from $\mathcal{F}$ that have a loss smaller than some $\gamma>0$. The loss of a classifier can be defined in multiple ways, hence there are multiple definitions of the Rashomon set. The Rashomon ratio $\mathcal{R}_{\text{ratio}}(\mathcal{F},\gamma)$ is a number between $0$ and $1$ that quantifies the proportion of the functions that belong to the Rashomon set $\mathcal{R}_{\text{set}}(\mathcal{F}, \gamma)$ within $\mathcal{F}$. The respective volumes of functions are measured using the probability measure $\rho$ on $\mathcal{F}$.  

\begin{definition}[true Rashomon set]\label{true_rashomon_set}
Let $\gamma \in [0,1]$, then
\begin{align*}
    \mathcal{R}_{\text{set}}(\mathcal{F}, \gamma)=\{f \in \mathcal{F}:L(f)\leq \inf_{f\in\mc{F}}L(f)+\gamma\},
\end{align*}
where $L$ is a population loss.
\end{definition}

\begin{definition}[true Rashomon ratio] \label{rashomon_ratio}
Let $\rho$ be a probability measure on $\mathcal{F}$
and let $\gamma \in [0,1]$, then
\begin{equation*}
    \mathcal{R}_{\text{ratio}}(\mathcal{F},\gamma)=\int_{f\in\mathcal{F}}\mathbbm{1}({f \in \mathcal{R}_{\text{set}}(\mathcal{F}, \gamma)})\rho(df).
\end{equation*}
\end{definition}

In practice we may not know the underlying distribution of the data or we may be unable to explicitly compute the reducible risk (the difference between the true risk and the Bayes risk). Then we may rely on the following variations of the previous definitions.
\begin{definition}[empirical Rashomon set] \label{empirical_rashomon_set}
Let $\gamma \in [0,1]$, then
\begin{align*}
    \hat{\mathcal{R}}_{\text{set}}(\mathcal{F}, \gamma)=\{f \in \mathcal{F}:\hat{L}(f) \leq \inf_{f\in\mc{F}}\hat{L}({f})+\gamma\},
\end{align*}
where $\hat{L}$ is an empirical loss (based on a training dataset $\mathcal{S}$).
\end{definition}

\begin{definition}[empirical Rashomon ratio] \label{empirical_rashomon_ratio}
Let $\rho$ be a probability measure on $\mathcal{F}$
and let $\gamma \in [0,1]$, then
\begin{equation*}
    \hat{\mathcal{R}}_{\text{ratio}}(\mathcal{F},\gamma)=\int_{f\in\mathcal{F}}\mathbbm{1}({f \in \hat{\mathcal{R}}_{\text{set}}(\mathcal{F}, \gamma)})\rho(df).
\end{equation*}
\end{definition}
 The previous definitions relate the error of certain functions to minimal errors such as Bayes error. Alternatively, one can define the same concepts using anchored version of the error.
 
\begin{definition}[anchored versions]\label{anchored}

Let $\rho$ be a probability measure on $\mathcal{F}$. Let $\gamma\in[0,1]$, let $L$ be a population loss given distribution of the data $\nu(\bs{x},y)$ and let $\hat{L}$ be an empirical loss for a given dataset $\mathcal{S}$. Then we define
\begin{itemize}
    \item (true anchored Rashomon set)
    \begin{align*}
    \mathcal{R}_{\text{set}}^{\text{anc}}(\mathcal{F}, \gamma)=\{f \in \mathcal{F}:L(f)\leq \gamma\},
    \end{align*}
    \item (true anchored Rashomon ratio)
    \begin{align*}
        \mathcal{R}_{\text{ratio}}^{\text{anc}}(\mathcal{F}, \gamma) = \int_{f\in \mathcal{F}}\mathbbm{1}(f \in \mathcal{R}_{\text{set}}^{\text{anc}}(\mathcal{F}, \gamma))\rho(df)
    \end{align*}
    \item (empirical anchored Rashomon set)
    \begin{align*}
           \hat{\mathcal{R}}_{\text{set}}^{\text{anc}}(\mathcal{F}, \gamma)=\{f \in \mathcal{F}:\hat{L}(f) \leq \gamma\},
    \end{align*}
    \item (empirical anchored Rashomon ratio)
    \begin{align*}
          \hat{\mathcal{R}}_{\text{ratio}}^{\text{anc}}(\mathcal{F},\gamma)=\int_{f\in\mathcal{F}}\mathbbm{1}({f \in \hat{\mathcal{R}}_{\text{set}}^{\text{anc}}(\mathcal{F}, \gamma)})\rho(df).
    \end{align*}
\end{itemize}

\end{definition}
To simplify the following discussion, we shall use the term ``Rashomon ratio" and `Ra\-shomon set" to refer to any of the previously defined Rashomon ratios and Rashomon sets. When the text refers to a specific one (empirical/true and anchored/not), this shall be made clear.

\subsection*{Notes about the probability measure $\rho$ }
The definition of the Rashomon ratio assumes the existence of a probability measure $\rho$ on $\mathcal{F}$.

A probability measure $\rho$ hence defines a measure on the space of functions $\mathcal{F}$.
The value of the Rashomon ratio depends on the choice of $\rho$ and this choice is, to a large extent, arbitrary. One can adapt the measure to the classification problem at hand so to increase the Rashomon ratio. This, in turn, will determine a way to select random samples from $\mathcal{F}$ so to have a higher probability of hitting samples from the Rashomon set. For example, the threshold separating two normally distributed classes in ${\mathbb R}$ is restricted to a compact interval containing both of the class means in \cite{semenova_rudin}. More specifically, $\rho$  is set to be a uniform distribution on the chosen interval. Naturally, increasing the size of the interval, or moving it side to side, changes the value of the Rashomon ratio.

When $\mathcal{F}$ is finite, a straightforward choice is to use a uniform distribution, i.e.~weighing all elements of $\mathcal{F}$ equally with a probability $\frac{1}{\mid \mathcal{F} \mid }$. This is the approach taken in \cite{semenova_rudin}. In this case, we have $\rho(f)=\frac{1}{\mid \mathcal{F}\mid } \sum_{f'\in \mathcal{F}} \delta(f-f')$. It is also possible to weigh certain functions inside $\mathcal{F}$ more heavily than others, as suggested in \cite{semenova2023a}. For discrete but infinite families $\mathcal{F}$, this is essential since weighing all functions equally would lead to a non-valid distribution with $\int_{f\in \mathcal{F}} \rho (df) =\infty$. When the training of a  classifier involves the choice of a random samples from ${\mathcal F}$, then it makes sense to use the same measure as the one used to generate the samples, as we do in Section \ref{twolayernn} for the case of a two-layer neural network initialized with random weights. This allows us to connect the initial guess and the Rashomon set. In general, a large Rashomon ratio is associated to a  large probability of obtaining a classifier with good accuracy by picking among a set of classifiers randomly chosen according to the probability measure $\rho$, as we show in Section \ref{section:advantage}.

 Sometimes a reparametrization of $\mc{F}$ allows us to use $\rho$, which is uniform. For example, if the functions in $ \mathcal{F}$ are parameterized by real-valued parameters \mbox{$(a_1,\ldots, a_d)\in {\mathbb R}^d$}, then a natural choice is to use the Lebesgue measure on $ \mathcal{F}$. However, if the set of possible parameters is unbounded (in the Euclidean sense), such $\rho$ can not be used to compute the Rashomon ratio in the sense of Definition \ref{rashomon_ratio}. Noticing equivalencies can be helpful in such cases. For example, an affine classifier $(a_0, a_1,\ldots, a_d)$ on $ {\mathbb R}^d$ is unchanged by multiplication with a positive constant. Thus, affine classifiers can be associated to points on a sphere in $ {\mathbb R}^{d+1}$, which is a compact set. Thus it is possible to weigh all affine classifiers ``equally" on the sphere, as we do in Section \ref{affine_gauss}, even though there are infinitely many of them.

\section{Numerical estimation of Rashomon ratio - guarantees} \label{num_guarantees}
The Rashomon ratio can be difficult to compute analytically.
An alternative is to use a method from a Monte-Carlo family to approximate it.
For example, one can can estimate  the Rashomon ratio numerically by generating $N$ sample functions from $\mc{F}$ independently according to $\rho$: the proportion of classifiers whose error is less than $\gamma$ is an approximation of a Rashomon ratio. When using the training error, the Rashomon ratio obtained is the empirical Rashomon ratio. Similarly, when using the population error without subtracting Bayes error, the Rashomon ratio obtained is the true anchored Rashomon ratio.

In the following analysis, the set of functions $\mc{F}$ for which we compute the Rashomon ratio is fixed. To simplify the notation, we will drop the symbol $\mc{F}$ from all of the expressions for the Rashomon ratio, so for example $\mc{R}_{\text{set}}({\mc{F}, \gamma})$ will be written as $\mc{R}_{\text{set}}({\gamma})$. Then, the aforementioned Rashomon ratio estimate can be expressed mathematically as: 
\begin{align}\label{approx_ratio}
    \tilde{\mathcal{R}}_{\text{ratio}}(\gamma,N)=\frac{1}{N}\sum_{i=1}^{N} \mathbbm{1}\left({f_i \in \mathcal{R}_{\text{set}}}(\gamma)\right) \sim \int_{\mc{F}} \mathbbm{1}\left({f \in \mathcal{R}_{\text{set}}}\right)\rho(df).
\end{align} 

If we have access to a formula for the reducible error of a classifier (which is the case for affine classification of a mixture of Gaussians as discussed in Section \ref{affine_gauss}) then for each function in the random draw $\{f_1,...,f_N\}$ we can easily decide if $f_i$ belongs to the true Rashomon set and thus estimate the Rashomon ratio using formula \ref{approx_ratio}. The next lemma quantifies how closely this approximates the true Rashomon ratio.

\begin{lemma} \label{ratio_hoeffding}
For all $\gamma>0$ and $\varepsilon>0$ we have 
    \begin{align*}
    P\left(\mid \tilde{\mathcal{R}}_{\text{ratio}}(\gamma,N)-\mathcal{R}_{\text{ratio}}(\gamma)\mid \geq \varepsilon\right)\leq 2\exp\left(-2N\varepsilon^2\right).
    \end{align*}
\end{lemma}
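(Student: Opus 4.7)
The plan is to recognize the estimator $\tilde{\mathcal{R}}_{\text{ratio}}(\gamma,N)$ as an empirical average of $N$ i.i.d. bounded random variables and apply Hoeffding's inequality directly. First I would define $Z_i := \mathbbm{1}(f_i \in \mathcal{R}_{\text{set}}(\gamma))$ for $i = 1, \ldots, N$. Because the $f_i$ are drawn i.i.d.\ from $\rho$, the $Z_i$ are i.i.d.\ Bernoulli random variables taking values in $\{0,1\} \subset [0,1]$.

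Next I would identify the common expectation. By the definition of the Rashomon ratio (Definition \ref{rashomon_ratio}) and the fact that the law of $f_i$ is $\rho$,
\begin{align*}
\mathbb{E}[Z_i] = \mathbb{E}\bigl[\mathbbm{1}(f_i \in \mathcal{R}_{\text{set}}(\gamma))\bigr] = \int_{\mathcal{F}} \mathbbm{1}(f \in \mathcal{R}_{\text{set}}(\gamma))\, \rho(df) = \mathcal{R}_{\text{ratio}}(\gamma).
\end{align*}
Consequently $\tilde{\mathcal{R}}_{\text{ratio}}(\gamma,N) = \frac{1}{N}\sum_{i=1}^N Z_i$ is an unbiased estimator of $\mathcal{R}_{\text{ratio}}(\gamma)$.

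Finally, since each $Z_i \in [0,1]$, Hoeffding's inequality applied to the sum $\sum_{i=1}^N Z_i$ with $a_i = 0$, $b_i = 1$ yields
\begin{align*}
P\left(\left|\frac{1}{N}\sum_{i=1}^N Z_i - \mathcal{R}_{\text{ratio}}(\gamma)\right| \geq \varepsilon\right) \leq 2\exp\left(-\frac{2N^2\varepsilon^2}{\sum_{i=1}^N (1-0)^2}\right) = 2\exp(-2N\varepsilon^2),
\end{align*}
which is exactly the stated bound.

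There is essentially no obstacle here: the only thing to verify carefully is that the $Z_i$ really are i.i.d.\ (which follows from the independence of the draws $f_1, \ldots, f_N$ from $\rho$) and that the indicator function is $\rho$-measurable, so that the expectation is well-defined and equals the Rashomon ratio. Measurability of $\mathcal{R}_{\text{set}}(\gamma)$ is implicit in the paper's setup, since the Rashomon ratio is already written as a $\rho$-integral in Definition \ref{rashomon_ratio}.
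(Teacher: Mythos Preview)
Your proof is correct and follows essentially the same approach as the paper: define the indicator variables, note they are i.i.d.\ and bounded in $[0,1]$ with mean $\mathcal{R}_{\text{ratio}}(\gamma)$, and apply Hoeffding's inequality. Your version is in fact a bit more careful in spelling out the expectation computation and the measurability caveat, but the argument is identical.
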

\begin{proof}
As long as we choose sample functions $f_i\in \mathcal{F}$ independently, the random variables $\mathbbm{1}\left(f_i \in \mathcal{R}_{\text{set}}\right)$ are independent. They are also bounded in the interval $[0,1]$. Therefore we can apply Hoeffding's inequality:
\begin{align*}
    P\left(\mid \frac{1}{N}\sum_{i=1}^{N}X_i-EX\mid  \geq \varepsilon\right) \leq 2\exp\left(-2N\varepsilon^2\right)
\end{align*}
for $X_i = \mathbbm{1}(f_i \in \mathcal{R}_{\text{set}})$ and $EX = E(\mathbbm{1}(f_i \in \mathcal{R}_{\text{set}}))$, which is the true Rashomon ratio $\mc{R}_{\text{ratio}}(\gamma)$. 
\end{proof}

Lemma \ref{ratio_hoeffding} is useful when we approximate the true Rashomon ratio when Bayes error is known for the given classification problem, so that the reducible error of a classifier can be obtained. If Bayes error is unknown, then one can use the true error of the classifier in place of the reducible error and approximate the anchored Rashomon ratio instead. The accuracy of such an approximation can be characterized in a similar way. So we can reformulate Lemma \ref{ratio_hoeffding} for the anchored Rashomon ratio using a very similar proof.

\begin{lemma} \label{anchored_ratio_hoeffding}
For all $\gamma>0$ and $\varepsilon>0$ we have 
    \begin{align*}
    P\left(\mid \tilde{\mathcal{R}}_{\text{ratio}}^{\text{anc}}(\gamma,N)-\mathcal{R}_{\text{ratio}}^{\text{anc}}(\gamma)\mid \geq \varepsilon\right)\leq 2\exp\left(-2N\varepsilon^2\right).
    \end{align*}
\end{lemma}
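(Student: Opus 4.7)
My plan is to mirror the proof of Lemma \ref{ratio_hoeffding} almost verbatim, since the only change is that the indicator inside the Monte-Carlo estimator now uses membership in the anchored Rashomon set $\mathcal{R}_{\text{set}}^{\text{anc}}(\gamma) = \{f \in \mathcal{F} : L(f) \leq \gamma\}$ rather than membership in $\mathcal{R}_{\text{set}}(\gamma)$. First I would write out the analogue of the estimator in \eqref{approx_ratio}, namely
\[
\tilde{\mathcal{R}}_{\text{ratio}}^{\text{anc}}(\gamma,N) = \frac{1}{N}\sum_{i=1}^{N}\mathbbm{1}\bigl(f_i \in \mathcal{R}_{\text{set}}^{\text{anc}}(\gamma)\bigr),
\]
where $f_1,\ldots,f_N$ are drawn i.i.d.\ from $\rho$.

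Next I would set $X_i = \mathbbm{1}(f_i \in \mathcal{R}_{\text{set}}^{\text{anc}}(\gamma))$ and verify the two hypotheses of Hoeffding's inequality: the $X_i$ are independent because the $f_i$ are drawn independently from $\rho$, and each $X_i$ takes values in $[0,1]$. Then I would compute the common expectation
\[
E[X_i] = \int_{\mathcal{F}} \mathbbm{1}\bigl(f \in \mathcal{R}_{\text{set}}^{\text{anc}}(\gamma)\bigr)\,\rho(df) = \mathcal{R}_{\text{ratio}}^{\text{anc}}(\gamma),
\]
using Definition \ref{anchored}, which identifies the mean of the sample average with the quantity being estimated.

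Finally I would apply Hoeffding's inequality in the same form as in the previous lemma,
\[
P\left(\left|\frac{1}{N}\sum_{i=1}^{N}X_i - E[X_1]\right| \geq \varepsilon\right) \leq 2\exp(-2N\varepsilon^2),
\]
and substitute $\tilde{\mathcal{R}}_{\text{ratio}}^{\text{anc}}(\gamma,N)$ and $\mathcal{R}_{\text{ratio}}^{\text{anc}}(\gamma)$ to obtain the desired bound. There is no genuine obstacle here: the argument is structurally identical to Lemma \ref{ratio_hoeffding}, and the only conceptual point worth flagging is that the indicator for the anchored Rashomon set is still a well-defined, bounded Bernoulli random variable since $L(f) \leq \gamma$ is a measurable condition on $f$ under the assumed $\sigma$-algebra $\sigma(\mathcal{F})$.
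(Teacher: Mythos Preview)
Your proposal is correct and matches the paper's approach exactly: the paper does not write out a separate proof but simply remarks that Lemma~\ref{ratio_hoeffding} can be reformulated for the anchored Rashomon ratio ``using a very similar proof,'' which is precisely the Hoeffding argument you outline with $X_i = \mathbbm{1}(f_i \in \mathcal{R}_{\text{set}}^{\text{anc}}(\gamma))$.
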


The situation gets more complicated when we cannot compute or bound neither the true nor the reducible error of a classifier for a given $f$. In such case, on top of a random draw of functions $\{f_1,...f_N\}$ we may rely on a finite dataset $\mc{S}$ (consisting of independent samples that follow an identical distribution $\mc{D}$) and approximate the empirical anchored Rashomon ratio. There are two levels of uncertainty for such an approximation: the one that originates in a random pick of functions from $\mc{F}$ and the one that originates in the random pick of datapoints from $\mc{S}$. Theorem \ref{empirical_sampled_ratio_close_to_true_ratio} gives guarantees for such an approximation.  

More specifically, consider an empirical estimate of the Rashomon ratio based on a dataset $\mathcal{S}$ and a set of sample functions $\{f_1,...,f_N\}$:
\begin{align*}
\tilde{\hat{\mathcal{R}}}_{\text{ratio}}^{\text{anc}}(\gamma,N) = \frac{1}{N}\sum_{i=1}^{N}\mathbbm{1}\left(\hat{L}(f_i)\leq \gamma\right) =  \frac{1}{N}\sum_{i=1}^{N}\mathbbm{1}\left(\frac{1}{n}\sum_{j=1}^n l(f_i(\bs{x}_j))\leq \gamma\right).
\end{align*}
Then we have the following result.
\begin{theorem}\label{empirical_sampled_ratio_close_to_true_ratio}
Let us consider a dataset $\mathcal{S}$ that contains $n$ independent identically distributed samples and a set of $N$ functions sampled independently from $\mathcal{F}$ following the probability measure $\rho$. Let us also assume that the loss function $l$ is bounded by $b$. Then for all $\gamma>0$, $\varepsilon>0$ and $\eta>0$ we have
\begin{align}
    P\left({{\mc{R}}}_{\text{ratio}}^{\text{anc}}(\gamma-\varepsilon)-\eta \leq \tilde{\hat{\mc{R}}}_{\text{ratio}}^{\text{anc}}(\gamma, N) \leq {{\mc{R}}}_{\text{ratio}}^{\text{anc}}(\gamma+\varepsilon)+\eta\right) \geq \nonumber \\
    \geq\left(1-\exp\left(-2\frac{n}{N}\left(\frac{\varepsilon}{b}\right)^2\right)\right)^N\left(1-\exp(-N\eta^2)\right)^2.
\end{align}
\end{theorem}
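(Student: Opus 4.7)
The plan is to compare $\tilde{\hat{\mc R}}_{\text{ratio}}^{\text{anc}}(\gamma, N)$ with $\mc R_{\text{ratio}}^{\text{anc}}(\gamma \pm \varepsilon)$ via an intermediate oracle Monte-Carlo estimator
\[
\tilde{\mc R}_{\text{ratio}}^{\text{anc}}(\gamma', N) := \frac{1}{N}\sum_{i=1}^{N}\mathbbm{1}(L(f_i) \leq \gamma'),
\]
which uses the true (unknown) loss. The overall strategy is a sandwich in two stages: first replace the doubly-empirical estimator by the oracle estimator at slightly shifted thresholds $\gamma\pm\varepsilon$ using concentration of $\hat{L}(f_i)$ around $L(f_i)$, then replace the oracle estimator by the true ratio via Monte-Carlo concentration in the spirit of Lemma \ref{anchored_ratio_hoeffding}.

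For the deterministic half of the sandwich, note that whenever $|\hat{L}(f_i) - L(f_i)| \leq \varepsilon$ for every $i \in \{1,\ldots,N\}$, the inclusions $\{L(f_i) \leq \gamma - \varepsilon\} \subseteq \{\hat{L}(f_i) \leq \gamma\} \subseteq \{L(f_i) \leq \gamma + \varepsilon\}$ immediately yield
\[
\tilde{\mc R}_{\text{ratio}}^{\text{anc}}(\gamma - \varepsilon, N) \;\leq\; \tilde{\hat{\mc R}}_{\text{ratio}}^{\text{anc}}(\gamma, N) \;\leq\; \tilde{\mc R}_{\text{ratio}}^{\text{anc}}(\gamma + \varepsilon, N).
\]
The probability of this good event is controlled by Hoeffding's inequality applied, for each $f_i$, to the corresponding iid samples of the loss $l \in [0,b]$. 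Allocating the $n$ samples across the $N$ independent function draws, one-sided Hoeffding per draw gives a failure probability of at most $\exp(-2(n/N)(\varepsilon/b)^2)$, and independence of the $N$ draws combines these to produce the factor $\bigl(1 - \exp(-2(n/N)(\varepsilon/b)^2)\bigr)^N$.

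For the second half of the sandwich, the indicators $\mathbbm{1}(L(f_i) \leq \gamma \pm \varepsilon)$ are iid Bernoulli with mean $\mc R_{\text{ratio}}^{\text{anc}}(\gamma \pm \varepsilon)$, so one-sided Hoeffding applied to their averages shows that the events $\tilde{\mc R}_{\text{ratio}}^{\text{anc}}(\gamma + \varepsilon, N) \leq \mc R_{\text{ratio}}^{\text{anc}}(\gamma + \varepsilon) + \eta$ and $\tilde{\mc R}_{\text{ratio}}^{\text{anc}}(\gamma - \varepsilon, N) \geq \mc R_{\text{ratio}}^{\text{anc}}(\gamma - \varepsilon) - \eta$ each hold with probability at least $1 - \exp(-N\eta^2)$; multiplying gives the factor $\bigl(1 - \exp(-N\eta^2)\bigr)^2$. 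Finally, one conditions on $(f_1, \ldots, f_N)$: the data-concentration factor then separates from the Monte-Carlo factor (which depends only on the $f_i$'s), and the stated product bound falls out.

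The main obstacle is the bookkeeping of dependencies that underlie this product form. The $N$ concentration events $|\hat{L}(f_i) - L(f_i)| \leq \varepsilon$ formally share the same dataset $\mathcal{S}$, so their joint probability does not factor naively; one either partitions $\mathcal{S}$ into $N$ equal portions of size $n/N$ to make them independent (this is precisely the cost that appears as $n/N$ in the exponent of the stated bound), or one tolerates the mild looseness introduced by this partitioning compared to a full union-bound argument with the original $n$. Similarly, the two one-sided Monte-Carlo events depend on the same function draws and are negatively correlated, so the squared form $(1-\exp(-N\eta^2))^2$ is a convenient but slightly conservative substitute for a sharper combined two-sided Hoeffding bound.
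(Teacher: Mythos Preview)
Your proposal is correct and follows essentially the same route as the paper: partition the $n$ data points into $N$ disjoint blocks of size $n/N$ (this is exactly why $n/N$ appears in the exponent), apply Hoeffding on each block to sandwich $\tilde{\hat{\mc R}}_{\text{ratio}}^{\text{anc}}(\gamma,N)$ between the oracle estimators $\tilde{\mc R}_{\text{ratio}}^{\text{anc}}(\gamma\pm\varepsilon,N)$, and then use one-sided Hoeffding twice to pass from the oracle estimators to the true anchored ratios. Your discussion of the dependence bookkeeping (conditioning on the $f_i$'s to separate the data-concentration factor, and the slight looseness in taking the product of the two Monte-Carlo bounds) is in fact more explicit than the paper's own ``combining'' step.
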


\begin{proof}
Let us split the dataset $\mc{S}$ into $N$ equisized sets (we can forget some datapoints if $N$ does not divide $n$) and for each function $f_i \in \{f_1,...,f_N\}$ use a different set of data to compute its empirical loss $\hat{L}(f_i) = \frac{1}{n/N}\sum_{j=1}^{n/N}l(f_i(\bs{x}_j),y_j)$. 

For each fixed $f_i\in \{f_1,...,f_N\} $ from Hoeffding's inequality we get that for all $\varepsilon>0$
\begin{align}
    P\left(L(f_i)-\varepsilon\leq \hat{L}(f_i)\leq L(f_i)+\varepsilon\right) \geq 1-2\exp\left(-2\frac{n}{N}\left(\frac{\varepsilon}{b}\right)^2\right).
\end{align}
Due to independence, the probability of the union of these events equals
\begin{align}
    P\left(\bigcup_{i=1}^N \left[L(f_i)-\varepsilon\leq \hat{L}(f_i)\leq L(f_i)+\varepsilon\right]\right) = \prod_{i=1}^N P\left(L(f_i)-\varepsilon\leq \hat{L}(f_i)\leq L(f_i)+\varepsilon\right) 
\end{align}
and can be lower bounded:
\begin{align}
       P\left(\bigcup_{i=1}^N \left[L(f_i)-\varepsilon\leq \hat{L}(f_i)\leq L(f_i)+\varepsilon\right]\right) \geq \left(1-2\exp\left(-2\frac{n}{N}\left(\frac{\varepsilon}{b}\right)^2\right)\right)^N.
\end{align}
When $L(f_i)-\varepsilon\leq \hat{L}(f_i)$ then for each $\gamma>0$ if $f_i \in \hat{\mc{R}}_{\text{set}}^{\text{anc}}(\gamma)$ then $f_i \in {\mc{R}}_{\text{set}}^{\text{anc}}(\gamma+\varepsilon)$ also. Since it is true for all $i \in \{1,...N\}$ we can conclude that $\tilde{\hat{\mc{R}}}_{\text{ratio}}^{\text{anc}}(\gamma) \leq \tilde{{\mc{R}}}_{\text{ratio}}^{\text{anc}}(\gamma+\varepsilon)$. Additionally, when $\hat{L}(f_i)\leq L(f_i)+\varepsilon$ then for each $\gamma>0$ if $f_i \in {\mc{R}}_{\text{set}}^{\text{anc}}(\gamma-\varepsilon)$ then $f_i \in \hat{\mc{R}}_{\text{set}}^{\text{anc}}(\gamma)$. And since it is true for all $i \in \{1,...N\}$ we can conclude that $\tilde{{\mc{R}}}_{\text{ratio}}^{\text{anc}}(\gamma-\varepsilon) \leq \tilde{\hat{\mc{R}}}_{\text{ratio}}^{\text{anc}}(\gamma)$. Therefore, an event 
\begin{align}
    \bigcup_{i=1}^N \left[L(f_i)-\varepsilon\leq \hat{L}(f_i)\leq L(f_i)+\varepsilon\right]
\end{align}
implies another event: for all $\gamma>0$
\begin{align}
    \tilde{{\mc{R}}}_{\text{ratio}}^{\text{anc}}(\gamma-\varepsilon, N)\leq\tilde{\hat{\mc{R}}}_{\text{ratio}}^{\text{anc}}(\gamma, N) \leq \tilde{{\mc{R}}}_{\text{ratio}}^{\text{anc}}(\gamma+\varepsilon, N). 
\end{align}
Therefore its probability is no smaller than:
\begin{align}\label{part_1}
    P\left(\tilde{{\mc{R}}}_{\text{ratio}}^{\text{anc}}(\gamma-\varepsilon, N)\leq\tilde{\hat{\mc{R}}}_{\text{ratio}}^{\text{anc}}(\gamma, N) \leq \tilde{{\mc{R}}}_{\text{ratio}}^{\text{anc}}(\gamma+\varepsilon, N)\right) \geq \left(1-2\exp\left(-2\frac{n}{N}\left(\frac{\varepsilon}{b}\right)^2\right)\right)^N.
\end{align}
Using Hoeffding's inequality we can show a one-sided version of Lemma \ref{anchored_ratio_hoeffding} :
\begin{align}\label{part_2}
    P\left({{\mc{R}}}_{\text{ratio}}^{\text{anc}}(\gamma)-\eta \leq \tilde{{\mc{R}}}_{\text{ratio}}^{\text{anc}}(\gamma, N) \right) \geq 1-\exp(-N\eta^2).
\end{align}
and 
\begin{align}\label{part_3}
    P\left(\tilde{{\mc{R}}}_{\text{ratio}}^{\text{anc}}(\gamma+\varepsilon, N) \leq {{\mc{R}}}_{\text{ratio}}^{\text{anc}}(\gamma+\varepsilon)+\eta\right) \geq 1-\exp(-N\eta^2).
\end{align}
Combining \ref{part_1}, \ref{part_2} and \ref{part_3} we get that 
\begin{align}
    P\left({{\mc{R}}}_{\text{ratio}}^{\text{anc}}(\gamma-\varepsilon)-\eta \leq \tilde{\hat{\mc{R}}}_{\text{ratio}}^{\text{anc}}(\gamma, N) \leq {{\mc{R}}}_{\text{ratio}}^{\text{anc}}(\gamma+\varepsilon)+\eta\right) \geq \nonumber \\
    \geq\left(1-\exp\left(-2\frac{n}{N}\left(\frac{\varepsilon}{b}\right)^2\right)\right)^N\left(1-\exp(-N\eta^2)\right)^2.
\end{align}
\end{proof}

\section{Rashomon ratio for affine classifiers applied to a mixture of two Gaussians}\label{affine_gauss}
Let us assume that the distribution $\nu(\bs{x},y)$ generating data points is a mixture of two Gaussian distributions: $\bs{X}\mid(Y=r) \sim \mathcal{N}_d(\bs{\mu}_r, \sigma^2\mathbb{I})$ and $P(Y=r)=\zeta_{r}$ where $r \in \{1,2\}$ and $\zeta_{r}$ are the priors of the corresponding distributions, such that $\zeta_1 + \zeta_{2} = 1$. The optimal decision function for two Gaussians with the same covariance matrix is the following:
\begin{align*}
    g(\boldsymbol{x})=g_1(\boldsymbol{x})-g_2(\boldsymbol{x}) = -\frac{\|\boldsymbol{x}-\boldsymbol{\bs{\mu}_1}\|^2}{2\sigma^2}+\frac{\|\boldsymbol{x}-\boldsymbol{\bs{\mu}_2}\|^2}{2\sigma^2} = -\frac{\bs{x}\cdot(\bs{\mu}_2-\bs{\mu}_1)}{\sigma^2}+\frac{\bs{\mu}_2^2-\bs{\mu}_1^2}{2\sigma^2}.
\end{align*}
When $\boldsymbol{x}$ is in one dimension, $g(x)=0$ when $x=\frac{\mu_1+\mu_2}{2}$, which makes Bayes error equal to 
\begin{align}\label{bayes_error_one_dim}
     1 - \Phi\left(\frac{\mid\mu_2-\mu_1\mid}{2\sigma}\right) =\Phi\left(-\frac{\mid\mu_2-\mu_1\mid}{2\sigma}\right), 
\end{align}
where $\Phi(x)$ stands for the cumulative distribution function of the standard one-dimensional Gaussian distribution with mean $0$ and variance $1$. For the multidimensional case, Bayes error equals
\begin{align}\label{bayes_error}
    1-\Phi\left(\frac{\|\boldsymbol{\mu}_2-\boldsymbol{\mu}_1\|}{2\sigma}\right) = \Phi\left(-\frac{\|\boldsymbol{\mu}_2-\boldsymbol{\mu}_1\|}{2\sigma}\right).
\end{align}

Affine classifier can be viewed as a projection of data to $\mathbb{R}$ followed by a thresholding. After projection, the multivariate gaussian $\boldsymbol{X} \sim \mathcal{N}_d\left(\boldsymbol{\mu}_1, \sigma^2\mathbb{I}\right)$ becomes a univariate gaussian $X_{\boldsymbol{p}} \sim \mathcal{N}_1\left(\boldsymbol{p}\cdot\boldsymbol{\mu}_1, \sigma^2\|\bs{p}\|\right)$. Therefore, after projection, the optimal error \ref{bayes_error} becomes 
\begin{align}\label{error_after_projection}
    1-\Phi\left(\frac{\mid\boldsymbol{p}\cdot\boldsymbol{\bs{\mu}}_2-\boldsymbol{p}\cdot\boldsymbol{\bs{\mu}}_1\mid}{2\sigma\|\bs{p}\|}\right) = \Phi\left(-\frac{\mid\boldsymbol{p}\cdot(\boldsymbol{\mu}_2-\boldsymbol{\mu}_1)\mid}{2\sigma\|\bs{p}\|}\right). 
\end{align}
The reducible error is therefore equal to:
\begin{align} \label{excess_error}
    \Phi\left(-\frac{\mid\boldsymbol{p}\cdot(\boldsymbol{\mu}_2-\boldsymbol{\mu}_1)\mid}{2\sigma\|\bs{p}\|}\right)-\Phi\left(-\frac{\|\boldsymbol{\mu}_2-\boldsymbol{\mu}_1\|}{2\sigma}\right) =  \Phi\left(\frac{\|\boldsymbol{\mu}_2-\boldsymbol{\mu}_1\|}{2\sigma}\right)-\Phi\left(\frac{\mid\boldsymbol{p}\cdot(\boldsymbol{\mu}_2-\boldsymbol{\mu}_1)\mid}{2\sigma\|\bs{p}\|}\right).
\end{align}

Let us define the family of affine classifiers in $\mathbb{R}^d$: \mbox{$\mathcal{F}_{\text{af}} = \{\sign(\bs{p}\cdot \bs{x}+t), \bs{p}\in \mathbb{R}^d, t \in \mathbb{R}\}$}, 

\begin{lemma} \label{excess_linear_mixture}
The reducible error of an affine classifier \mbox{$\sign(\bs{p}\cdot\bs{x}+t) \in \mathcal{F}_{\text{af}}$} is
\begin{align}\label{excess_error_multi}
\begin{split}
    \mathcal{E}_{\bs{\mu}_1,\bs{\mu}_2,\sigma}(\bs{p},t) = \Phi\left(\frac{\|\bs{\mu}_2-\bs{\mu}_1\|}{2\sigma}\right)-\zeta\Phi\left(\frac{\max(\bs{p}\cdot\bs{\mu}_1,\bs{p}\cdot \bs{\mu}_2)-t}{\sigma\|\bs{p}\|}\right)-\\
    -(1-\zeta)\Phi\left(\frac{t-\min(\bs{p}\cdot\bs{\mu}_1,\bs{p}\cdot\bs{\mu}_2)}{\sigma\|\bs{p}\|}\right),
    \end{split}
\end{align}
where $\zeta = \zeta_1$ if $\max(\bs{p}\cdot\bs{\mu}_1, \bs{p}\cdot \bs{\mu}_2)= \bs{p}\cdot\bs{\mu}_1$, otherwise $\zeta=\zeta_2$.
\end{lemma}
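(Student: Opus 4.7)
The plan is to reduce the $d$-dimensional classification problem to a one-dimensional thresholding problem, compute the population loss in closed form in terms of univariate Gaussian CDFs, and then subtract the Bayes error given in (\ref{bayes_error}).

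First I would observe that $\sign(\bs{p}\cdot\bs{x}+t)$ depends on $\bs{x}$ only through the scalar $\bs{p}\cdot\bs{x}$, and that under $\bs{X}\mid(Y=r)\sim\mathcal{N}_d(\bs{\mu}_r,\sigma^2\mathbb{I})$ this projection is itself univariate Gaussian,
\[
\bs{p}\cdot\bs{X}\mid(Y=r)\sim\mathcal{N}_1\!\left(\bs{p}\cdot\bs{\mu}_r,\;\sigma^2\|\bs{p}\|^2\right).
\]
Hence the classifier is equivalent to thresholding a one-dimensional Gaussian mixture. Since the family $\mathcal{F}_{\text{af}}$ is closed under the replacement $(\bs{p},t)\mapsto(-\bs{p},-t)$, I may assume the classifier is oriented so that the class with the larger projected mean is predicted on one side of the threshold and the class with the smaller projected mean on the other; this is precisely what is encoded by the $\max$/$\min$ notation together with the rule that $\zeta$ is the prior of the max-mean class. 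Conditioning on the class label and standardizing each Gaussian tail then yields
\[
\mathrm{acc}(\bs{p},t) = \zeta\,\Phi\!\left(\frac{\max(\bs{p}\cdot\bs{\mu}_1,\bs{p}\cdot\bs{\mu}_2)-t}{\sigma\|\bs{p}\|}\right) + (1-\zeta)\,\Phi\!\left(\frac{t-\min(\bs{p}\cdot\bs{\mu}_1,\bs{p}\cdot\bs{\mu}_2)}{\sigma\|\bs{p}\|}\right),
\]
so the population loss is $L(\bs{p},t) = 1 - \mathrm{acc}(\bs{p},t)$.

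Finally, I would subtract the Bayes error from (\ref{bayes_error}). Using the identity $1 - \Phi(-x) = \Phi(x)$ to rewrite $1-\Phi\!\left(-\tfrac{\|\bs{\mu}_2-\bs{\mu}_1\|}{2\sigma}\right)=\Phi\!\left(\tfrac{\|\bs{\mu}_2-\bs{\mu}_1\|}{2\sigma}\right)$ and collecting terms recovers exactly the right-hand side of (\ref{excess_error_multi}). The only delicate step is the orientation bookkeeping — which half-space is assigned to which class for a given $(\bs{p},t)$ — and that is precisely what the $\max$/$\min$ notation together with the case-dependent definition of $\zeta$ absorb; I expect this to be the main place where an error could slip in, while the remaining manipulations are a routine standardization of Gaussian CDFs.
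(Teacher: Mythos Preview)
Your proposal is correct and follows essentially the same route as the paper: reduce to a one-dimensional thresholding problem via the projection $\bs{p}\cdot\bs{X}$, express the resulting error with univariate Gaussian CDFs, and compare to the Bayes error (\ref{bayes_error}). The only organizational difference is that the paper inserts an intermediate telescoping term---splitting the reducible error into a ``projection error'' $\Phi\!\big(\tfrac{\|\bs{\mu}_2-\bs{\mu}_1\|}{2\sigma}\big)-\Phi\!\big(\tfrac{|\bs{p}\cdot(\bs{\mu}_2-\bs{\mu}_1)|}{2\sigma\|\bs{p}\|}\big)$ plus a ``thresholding error''---whereas you compute $L(\bs{p},t)$ in one shot and subtract the Bayes error directly; the two computations are equivalent.
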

\begin{proof}
    The difference between the optimal error of the original high-dimensional problem (\ref{bayes_error}) and its projected version (\ref{error_after_projection}) is equal to 
    \begin{align*}
         \Phi\left(\frac{\|\bs{\mu}_2-\bs{\mu}_1\|}{2\sigma}\right)-\Phi\left(\frac{\mid\bs{p}\cdot(\bs{\mu}_2-\bs{\mu}_1)\mid}{2\sigma\|\bs{p}\|}\right).
    \end{align*}
    We make an additional error by choosing a threshold after projection. That error is equal to 
    \begin{align*}
        \Phi\left(\frac{\bs{p}\cdot(\bs{\mu}_2-\bs{\mu}_1)}{2\sigma\|\bs{p}\|}\right)-\zeta\Phi\left(\frac{\max(\bs{p}\cdot\bs{\mu}_1, \bs{p}\cdot \bs{\mu}_2)-t}{\sigma\|\bs{p}\|}\right)-(1-\zeta)\Phi\left(\frac{t-\min(\bs{p}\cdot\bs{\mu}_1, \bs{p}\cdot \bs{\mu}_2)}{\sigma\|\bs{p}\|}\right).
    \end{align*}
    If we add these two errors together, we get the statement of the lemma. 
\end{proof}

The set of affine classifiers $\mathcal{F}_{\text{af}}$ is a compact set that corresponds to a unit hypersphere $S^d$ in $\mathbb{R}^{d+1}$, because we can rescale the parameters $\bs{p}, t$ so that they lay on the hypersphere without changing the value of the classifier ($(\bs{p},t)$ stands for concatenating the vector $\bs{p}$ with the number $t$):
\begin{align*}
    f(\bs{x}) = \sign(\bs{p}\cdot \bs{x} +t) = \sign\left(\dfrac{\bs{p}\cdot \bs{x} +t}{\|(\bs{p},t)\|}\right).
\end{align*}
In Figure \ref{one_dim_circle} we give some examples of the Rashomon sets (in blue) with varying distances between the means $(2\mu)$ for the affine classifiers applied to a mixture of two Gaussians. Each point on the hypersphere (in one-dimensional case the hypersphere corresponds to the circle) represents a randomly chosen classifier.

\begin{figure}[h]
     \centering
     \begin{subfigure}{0.3\textwidth}
         \includegraphics[width=\textwidth]{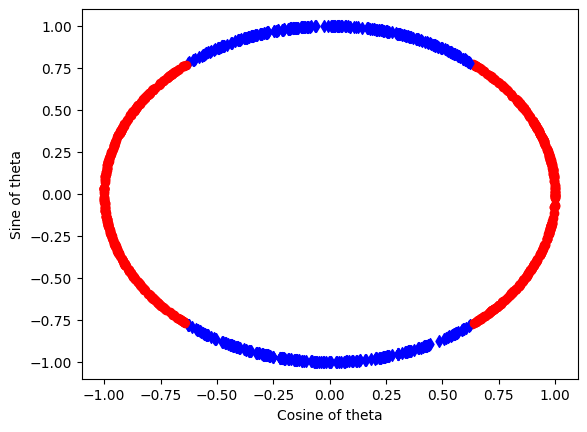}
         \caption{$2\mu = 1$, $\tilde{\mathcal{R}}_{\text{ratio}} = 0.438$}
         \label{rashomon_set_1}
     \end{subfigure}
     \begin{subfigure}{0.3\textwidth}
         \includegraphics[width=\textwidth]{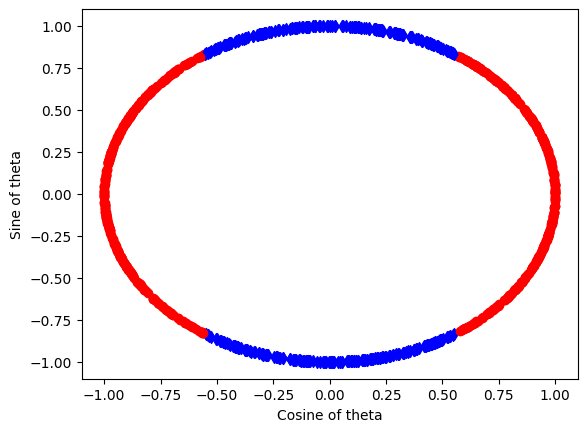}
         \caption{$2\mu = 2$, $\tilde{\mathcal{R}}_{\text{ratio}} = 0.375$}
         \label{rashomon_set_2}
     \end{subfigure}
    \begin{subfigure}{0.3\textwidth}
         \includegraphics[width=\textwidth]{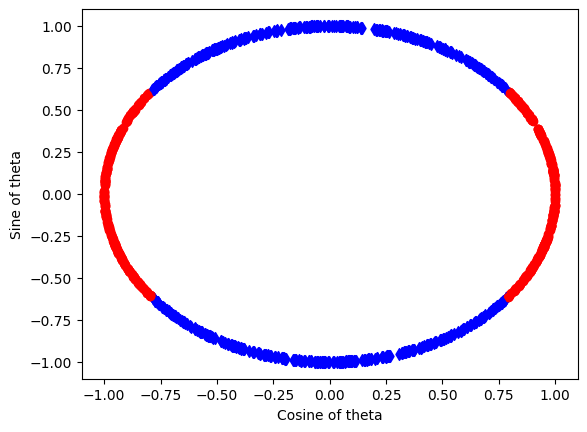}
         \caption{$2\mu = 5$, $\tilde{\mathcal{R}}_{\text{ratio}} =0.581$}
         \label{rashomon_set_5}
     \end{subfigure}
    \begin{subfigure}{0.3\textwidth}
         \includegraphics[width=\textwidth]{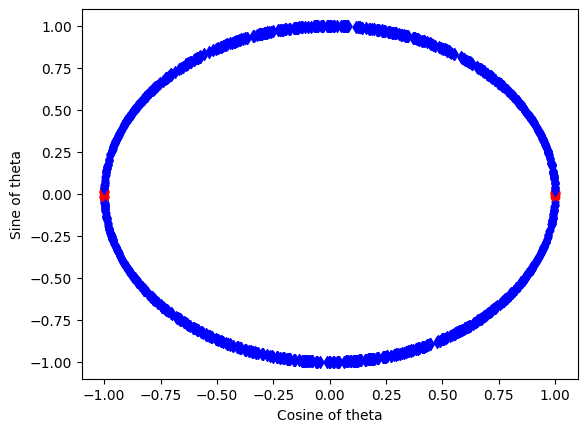}
         \caption{$2\mu = 100$, $\tilde{\mathcal{R}}_{\text{ratio}} =0.989$}
         \label{rashomon_Set_100}
     \end{subfigure}
     \caption{Rashomon set for one-dimensional classification of a mixture of Gaussians by affine functions. Points on the circle correspond to functions in $\mathcal{F}_{\text{af}}$: blue diamonds are functions that belong to the Rashomon set with $\gamma=0.05$, red circles represent functions that do not. Each figure contains $1000$ samples from $\mathcal{F}_{\text{af}}$, the subfigures differ in the distance between the means ($2\mu$), while the other parameters are fixed: $d=1$, $\sigma=1$.} 
     \label{one_dim_circle}
\end{figure}

\subsection{One-dimensional case}
If $d=1$ we can express the reducible error using just one parameter $\theta \in (-\zeta, \zeta]$. Setting $p = \sin(\theta)$ and $f = \cos{\theta}$, the classifier from $\mathcal{F}_{\text{af}}^{c}$ takes the following form
\begin{align*}
    f(x) = \sign(\sin(\theta)x+\cos(\theta)).
\end{align*}
For any $\mu_1,\mu_2 \in \mathbb{R}$ the reducible error of $f$ then becomes 
\begin{align}\label{excess_error_one}
    \mathcal{E}_{\mu_1,\mu_2,\sigma}(\theta) = \Phi\left(\frac{\mu_2-\mu_1}{2\sigma}\right)-\zeta\Phi\left(\frac{\sin(\theta)\mu_2-\cos(\theta)}{\sin(\theta)\sigma}\right)-(1-\zeta) \Phi\left(\frac{\cos(\theta)-\sin(\theta)\mu_1}{\sin(\theta)\sigma}\right).
\end{align}
If we additionally assume that $-\mu_1=\mu_2=\mu$ (antipodal case), the reducible error can be expressed as
\begin{align} \label{zero_com}
    \mathcal{E}_{\mu, \sigma}(\theta)=\Phi\left(\frac{\mu}{\sigma}\right)-\zeta\Phi\left(\frac{\sin(\theta)\mu-\cos(\theta)}{\sin(\theta)\sigma}\right)- (1-\zeta)\Phi\left(\frac{\cos(\theta)+\sin(\theta)\mu}{\sin(\theta)\sigma}\right).
\end{align}

We now explore the analytical properties of the reducible error and Rashomon ratio for the one-dimensional antipodal case.  Let us assume that we have a uniform distribution on the compact set $\mathcal{F}_{af}$ (i.e.~the unit circle). In this case the optimal classifier belongs to $\mc{F}_{af}$, therefore $\inf_{f\in \mc{F}}L(f)$ is the Bayes error for this classification problem and the Rashomon ratio can be expressed as 
\begin{align}
\begin{split}
    \mathcal{R}_{\text{ratio}}(\mu, \sigma, \gamma) = \\
    =\frac{1}{\pi}\int_{0}^{\pi}\mathbbm{1}\left(\Phi\left(\dfrac{\mu}{\sigma}\right)-\zeta\Phi\left(\dfrac{\sin(\theta)\mu-\cos(\theta)}{\sin(\theta)\sigma}\right)-(1-\zeta)\Phi\left(\dfrac{\cos(\theta)+\sin(\theta)\mu}{\sin(\theta)\sigma}\right)\leq\gamma\right)d\theta.
\end{split}
\end{align}

\begin{lemma}(properties of the Rashomon ratio for the case of a mixture of one-dimen\-sional Gaussians)\label{properties_rashomon_Ratio}
    In the case of affine classifiers applied to an antipodal mixture of two one-dimensional Gaussians with means $\pm \mu$, variance $\sigma^2$ and equal priors: $\zeta=\zeta_1=\zeta_2=1/2$ we have that $\forall \gamma>0, \; \sigma>0$ the function $\mathcal{R}_{\text{ratio}}(\mu) = \mathcal{R}_{\text{ratio}}(\mu, \sigma, \gamma)$ as a function of the distance between means, has the following properties:
\begin{itemize}
    \item[(i)] $\mathcal{R}_{\text{ratio}}(\mu)$ is bounded in an interval $[0,1]$ and if $\mu=0$ then $\mathcal{R}_{\text{ratio}}(0)=1$,
    
    \item[(ii)] $\mathcal{R}_{\text{ratio}}(\mu)$ is continuous in $(0,\infty)$,
    
    \item[(iii)] $\lim_{\mu\rightarrow \infty}\mathcal{R}_{\text{ratio}}(\mu)=1$,
    
    \item[(iv)] $\mathcal{R}_{\text{ratio}}(\mu)$ attains its global minimum in $(0, \infty)$,
    
    \item[(v)] the global minimum of $\mathcal{R}_{\text{ratio}}(\mu)$ is strictly larger than $0$.
\end{itemize}
\end{lemma}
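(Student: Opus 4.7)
The plan is to handle (i) through (v) in order, leveraging continuity of the reducible error $\mathcal{E}_{\mu,\sigma}(\theta)$ in \eqref{zero_com}, together with the dominated convergence theorem and a compactness argument. For (i), $\mathcal{R}_{\text{ratio}}\in[0,1]$ is immediate since the ratio is the $\frac{1}{\pi}d\theta$-integral of an indicator on $[0,\pi]$. When $\mu=0$ the two class-conditional densities coincide, so every classifier has error $1/2$, which equals the Bayes error; the reducible error therefore vanishes identically in $\theta$, every $\theta$ lies in the Rashomon set, and $\mathcal{R}_{\text{ratio}}(0)=1$.

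For (ii), I would apply dominated convergence along any sequence $\mu_n\to\mu>0$. Two inputs are required. First, $\mathcal{E}_{\mu,\sigma}$ is jointly continuous in $(\theta,\mu)$ on $(0,\pi)\times(0,\infty)$, since it is a composition of $\Phi$ with functions of the form $(\mu\pm\cot\theta)/\sigma$ that are non-singular on that set. Second, for each fixed $\mu>0$, $\theta\mapsto\mathcal{E}_{\mu,\sigma}(\theta)$ is real analytic on $(0,\pi)$ (because $\Phi$ is entire and its arguments are analytic there) and nonconstant: it vanishes at $\theta=\pi/2$ (the Bayes-optimal threshold) while tending to $\Phi(\mu/\sigma)-1/2>0$ as $\theta\to 0^+$ or $\theta\to\pi^-$ (the trivial always-one-class classifiers). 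Hence the level set $\{\theta:\mathcal{E}_{\mu,\sigma}(\theta)=\gamma\}$ is discrete and has Lebesgue measure zero, the indicator converges pointwise almost everywhere, and dominated convergence gives $\mathcal{R}_{\text{ratio}}(\mu_n)\to\mathcal{R}_{\text{ratio}}(\mu)$. For (iii), a direct computation shows that for every fixed $\theta\in(0,\pi)$ both $\Phi$-terms in \eqref{zero_com} tend to $1$ as $\mu\to\infty$ (as does $\Phi(\mu/\sigma)$), so $\mathcal{E}_{\mu,\sigma}(\theta)\to 0$; for any $\gamma>0$ the indicator converges to $1$ a.e.\ in $\theta$, and dominated convergence yields the limit.

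For (iv), an analogous computation shows $\mathcal{E}_{\mu,\sigma}(\theta)\to 0$ pointwise on $(0,\pi)$ as $\mu\to 0^+$ (using $\Phi(x)+\Phi(-x)=1$), so by dominated convergence $\mathcal{R}_{\text{ratio}}(\mu)\to 1$ as $\mu\to 0^+$ as well. Combined with (ii) and (iii), $\mathcal{R}_{\text{ratio}}$ extends continuously to $[0,\infty]$ with boundary values $1$ at both ends. Either $\mathcal{R}_{\text{ratio}}\equiv 1$ on $(0,\infty)$, in which case attainment is trivial, or I pick $\mu_0\in(0,\infty)$ with $c:=\mathcal{R}_{\text{ratio}}(\mu_0)<1$, choose $0<a<b<\infty$ so that $\mathcal{R}_{\text{ratio}}(\mu)>c$ for $\mu\notin[a,b]$, and extract the minimizer $\mu^*\in[a,b]\subset(0,\infty)$ by continuity on the compact interval $[a,b]$. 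For (v), at any such minimizer $\mu^*$, continuity of $\theta\mapsto\mathcal{E}_{\mu^*,\sigma}(\theta)$ combined with $\mathcal{E}_{\mu^*,\sigma}(\pi/2)=0<\gamma$ yields an open interval $U\ni\pi/2$ on which $\mathcal{E}_{\mu^*,\sigma}<\gamma$; therefore $\mathcal{R}_{\text{ratio}}(\mu^*)\ge|U|/\pi>0$.

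The main technical point will be (ii): cleanly justifying that the level set $\{\theta:\mathcal{E}_{\mu,\sigma}(\theta)=\gamma\}$ has Lebesgue measure zero via real-analyticity, and dealing with the two removable singularities at $\theta\in\{0,\pi\}$ where $\sin\theta=0$. These endpoints contribute only a null set, and the reducible error has natural one-sided limits there (corresponding to the trivial classifiers), so they do not affect any of the dominated-convergence arguments; the remaining steps are routine continuity-and-compactness.
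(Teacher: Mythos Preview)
Your proof is correct and broadly parallel to the paper's, but parts (ii) and (iv) take genuinely different routes.

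For (ii), the paper does not invoke dominated convergence or real-analyticity. Instead it computes $\partial_\theta \mathcal{E}_{\mu,\sigma}(\theta)$ explicitly and shows (for $\zeta=1/2$) that the reducible error is strictly decreasing on $(0,\pi/2)$ and strictly increasing on $(\pi/2,\pi)$. The Rashomon set is therefore a single arc around $\theta=\pi/2$, and the inverse function theorem turns a small perturbation in $\mu$ into a small perturbation of the arc endpoints, hence of the ratio. Your argument replaces this structural fact with the observation that $\theta\mapsto\mathcal{E}_{\mu,\sigma}(\theta)$ is real-analytic and nonconstant on $(0,\pi)$, so every level set has measure zero and dominated convergence applies directly. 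Your route is more flexible (it would survive situations where unimodality fails), while the paper's route yields the extra geometric information that the Rashomon set is an arc, which is used in the figures and intuition but is not needed for the lemma itself.

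For (iv), the paper argues that if the minimum were not attained then $\inf_\mu \mathcal{R}_{\text{ratio}}(\mu)$ would have to equal $\lim_{\mu\to\infty}\mathcal{R}_{\text{ratio}}(\mu)=1$, and concludes attainment from there. Your version is tighter: you additionally establish $\mathcal{R}_{\text{ratio}}(\mu)\to 1$ as $\mu\to 0^+$ (via the same pointwise computation as in (i) together with dominated convergence), and then run a clean compactness argument on a finite interval $[a,b]$. This closes a small gap in the paper's presentation, which tacitly ignores the possibility that the infimum is approached as $\mu\to 0^+$ (continuity at $0$ is not asserted in (ii)).

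Parts (i), (iii), and (v) match the paper's arguments essentially verbatim: direct computation for $\mu=0$, dominated convergence for $\mu\to\infty$, and continuity of $\mathcal{E}_{\mu^*,\sigma}$ at $\theta=\pi/2$ to produce a neighborhood of positive length contained in the Rashomon set.
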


\begin{proof}
    \begin{itemize}
    \item[(i)] By the definition of the Rashomon ratio on a compact set we have 
    \begin{align*}
        \mathcal{R}_{\text{ratio}}(\mu, \mathcal{F},\gamma) = \dfrac{\mathcal{V}(\mathcal{R}_{\text{set}}(\mu,\mathcal{F}, \gamma))}{\mathcal{V}(\mathcal{F})}
    \end{align*}
    and since $\{\} \subset \mathcal{R}_{\text{set}}(\mu,\mathcal{F},\gamma)\subset \mathcal{F}$
    we get that $0\leq\mathcal{R}_{\text{ratio}}(\mu)\leq 1$. When $\mu=0$, the reducible error
    \begin{align*}
        \mathcal{E}_{\mu,\sigma}(\theta) = \Phi\left(\frac{\mu}{\sigma}\right)-\zeta\Phi\left(\frac{\sin(\theta)\mu-\cos(\theta)}{\sin(\theta)\sigma}\right)-(1-\zeta)\Phi\left(\frac{\sin(\theta)\mu+\cos(\theta)}{\sin(\theta)\sigma}\right) = \\
        \Phi(0)-1/2\Phi\left(-\frac{\cos(\theta)}{\sin(\theta)\sigma}\right)-1/2\Phi\left(\frac{\cos(\theta)}{\sin(\theta)\sigma}\right) = 0,
    \end{align*}
    for all $\theta>0$, since $\Phi(-x)=1-\Phi(x)$. Therefore $\mathcal{F} \subset \mathcal{R}_{\text{set}}(\mu,\mathcal{F}, \gamma)$ and $\mathcal{R}_{\text{ratio}}(0)=1$.
    
    \item[(ii)] 
      Due to formula \ref{zero_com} we know that the reducible error $\mathcal{E}_{\mu, \sigma}(\theta)$ is continuous in $\mu$ for $\mu\in [0,\infty)$ and in $\theta$ for $\theta \in (0,\pi)$. For each $\varepsilon>0$, there exists $\delta>0$, such that for every $\bar{\mu} \in (\mu-\varepsilon,\mu+\varepsilon)$ the reducible error is close: $\mid\mathcal{E}_{\mu, \sigma}(\theta)-\mathcal{E}_{\bar{\mu}, \sigma}(\theta)\mid \leq \delta$. Assuming that $\phi$ is the density of the standard normal distribution, the derivative of the reducible error function with respect to $\theta$ is 
    \begin{align*}
        \frac{\partial}{\partial \theta} \mathcal{E}_{\mu, \sigma}(\theta) = \frac{1}{\sin(\theta)^2\sigma}\left(-\zeta\phi\left(\frac{\sin(\theta)\mu-\cos(\theta)}{\sin(\theta)\sigma}\right)+(1-\zeta)\phi\left(\frac{\sin(\theta)\mu+\cos(\theta)}{\sin(\theta)\sigma}\right)\right),
    \end{align*}
    let us substitute $\zeta=1/2$ and we see that if 
    \begin{align*}
        \left(-\frac{1}{2}\phi\left(\frac{\sin(\theta)\mu-\cos(\theta)}{\sin(\theta)\sigma}\right)+\frac{1}{2}\phi\left(\frac{\sin(\theta)\mu+\cos(\theta)}{\sin(\theta)\sigma}\right)\right)<0
    \end{align*}
    than the reducible error is decreasing, otherwise it is increasing. It is equivalent with the following if
    \begin{align*}
         \phi\left(\frac{\sin(\theta)\mu-\cos(\theta)}{\sin(\theta)\sigma}\right)>\phi\left(\frac{\sin(\theta)\mu+\cos(\theta)}{\sin(\theta)\sigma}\right)
    \end{align*}
    then the reducible error is decreasing. The larger the distance of $x$ from zero, the larger is its value measured by the normal density. Therefore, if
    \begin{align*}
        \frac{\sin(\theta)\mu-\cos(\theta)}{\sin(\theta)\sigma} < \frac{\sin(\theta)\mu+\cos(\theta)}{\sin(\theta)\sigma},
    \end{align*}
    which is equivalent with $\cos(\theta)>0$, then the reducible error is decreasing. That means that the reducible error as a function of $\theta$ is decreasing on the interval $(0, \pi/2)$ and increasing on the interval $(\pi/2,\pi)$. Due to the inverse function theorem that means that on both intervals the reducible error has an inverse which is also differentiable and continuous. We can connect these two functions at the point that corresponds to $\theta=\pi/2$ and get a function which might not be differentiable there, but is continuous. Therefore, there exists $\eta>0$, such that $\mid\mathcal{E}_{\mu, \sigma}(\theta)-\mathcal{E}_{\bar{\mu}, \sigma}(\theta)\mid \leq \delta$ implies that $\theta \in (\theta-\eta,\theta+\eta)$. In this particular case the Rashomon set corresponds to an arc on the circle. The angle of the arc can not grow or shrink by more than $\eta$ due to the previous discussion. Therefore the Rashomon ratio can not grow or shrink by more than $\eta/\pi$:
    \begin{align*}
        \mathcal{R}_{\text{ratio}}(\mu\pm\varepsilon, \mathcal{F},\gamma)\in \left[ \mathcal{R}_{\text{ratio}}({\mu}, \mathcal{F},\gamma) -\frac{\eta}{\pi}, \mathcal{R}_{\text{ratio}}({\mu}, \mathcal{F},\gamma) + \frac{\eta}{\pi} \right]
    \end{align*}
    Therefore Rashomon ratio is continuous in $\mu$.
        
    \item[(iii)] The limit of the reducible error when means move apart and $\mu$ goes to infinity is
    \begin{align*}
   &\lim_{\mu\rightarrow\infty}\mathcal{E}(\mu,\sigma, \theta) = \\
   &=\lim_{\mu \rightarrow \infty}\left(\Phi\left(\frac{\mu}{\sigma}\right)-\zeta\Phi\left(\frac{\sin(\theta)\mu-\cos(\theta)}{\sin(\theta)\sigma}\right)- (1-\zeta)\Phi\left(\frac{\cos(\theta)+\sin(\theta)\mu}{\sin(\theta)\sigma}\right)\right)=\\
   &=\lim_{\mu\rightarrow\infty}\left(\Phi\left(\frac{\mu}{\sigma}\right)-1/2\Phi\left(\frac{\mu}{\sigma}-\frac{\cot(\theta)}{\sigma}\right)-1/2\Phi\left(\frac{\mu}{\sigma}+\frac{\cot(\theta)}{\sigma}\right)\right)=0.
\end{align*}
For the Rashomon ratio we have:
\begin{align}\label{limit_rash_ratio_mu}
    \lim_{\mu\rightarrow\infty} \mathcal{R}_{\text{ratio}}(\mu)= \lim_{\mu\rightarrow\infty}\frac{1}{2\pi}\int_{-\pi}^{\pi}\mathbbm{1}\left(\mathcal{E}(\mu, \sigma, \theta)\leq\gamma\right)d\theta.
\end{align}
 We can switch the order of the limit and integral due to Dominated Convergence Theorem and since indicator is a continuous function everywhere apart from one point we get that the right hand side of \ref{limit_rash_ratio_mu} is equal to: 
\begin{align*}
\begin{split}
    \frac{1}{2\pi}\int_{-\pi}^{\pi}\mathbbm{1}\left(\lim_{\mu\rightarrow \infty}\left(\mathcal{E}(\mu, \sigma, \theta)\leq\gamma\right)\right)d\theta =\frac{1}{2\pi}\int_{-\pi}^{\pi} d\theta=1.
\end{split}
\end{align*}
Therefore the Rashomon ratio for any $\gamma>0$ converges to $1$ as $\mu \rightarrow \infty$.
\item[(iv)] If minimum of $\mathcal{R}_{\text{ratio}}(\mu)$ is not attained, then $\inf_{\mu\in(0, \infty)}\mathcal{R}_{\text{ratio}}(\mu) = \lim_{\mu \rightarrow \infty} \mathcal{R}_{\text{ratio}}(\mu)$. Due to (iii) $\lim_{\mu \rightarrow \infty} \mathcal{R}_{\text{ratio}}(\mu) = 1$, therefore there exists at least one point $\mu^{\star} \in [0, \infty)$ such that $\mu^{\star} = \min_{\mu\in [0,\infty)}\mathcal{R}_{\text{ratio}}(\mu) = \inf_{\mu\in [0,\infty)}\mathcal{R}_{\text{ratio}}(\mu)$, which is a global minimum.
 \item[(v)] When $\theta$ converges to $\pi/2$, the reducible error converges to $0$:
 \begin{align*}
 \lim_{\theta \rightarrow \pi/2} \mathcal{E}_{\mu,\sigma} (\theta) =\lim_{\theta \rightarrow \pi/2}  \Phi\left(\dfrac{\mu}{\sigma}\right)-1/2\Phi\left(\dfrac{\sin(\theta)\mu-\cos(\theta)}{\sin(\theta)\sigma}\right)-1/2\Phi\left(\dfrac{\cos(\theta)+\sin(\theta)\mu}{\sin(\theta)\sigma}\right) = \\ =\Phi\left(\dfrac{\mu}{\sigma}\right)-1/2\Phi\left(\dfrac{\mu}{\sigma}\right)-1/2\Phi\left(\dfrac{\mu}{\sigma}\right)=0.
 \end{align*}
 Therefore due to continuity of the reducible error in $\theta$ near $\pi/2$, for every $\gamma>0$ there exists a neighborhood $(\pi/2 - \eta_{\gamma}, \pi/2+\eta_{\gamma})$, such that if $\theta$ belongs to this neighborhood, the reducible error is less than $\gamma$. Therefore for each $\gamma>0$ the Rashomon ratio is strictly larger than $2\eta_{\gamma}/\pi$, its minimum is strictly larger than $0$.
    \end{itemize}
\end{proof}

\begin{figure}[h]
     \centering
     \begin{subfigure}{0.45\textwidth}
         \includegraphics[width=\textwidth]{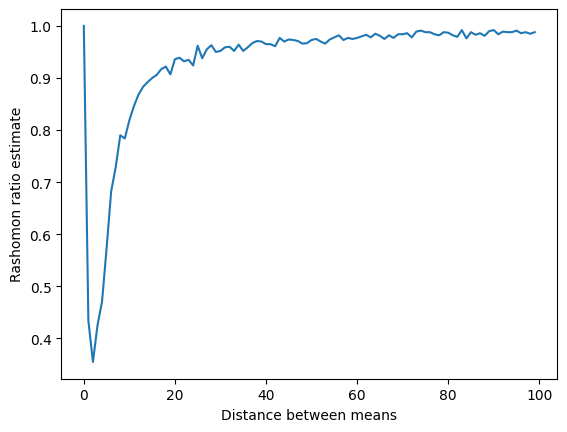}
         \caption{The distance $2\mu \in [0,100]$.}
         \label{ratio_means_wider_one_dim}
     \end{subfigure}
     \begin{subfigure}{0.45\textwidth}
         \includegraphics[width=\textwidth]{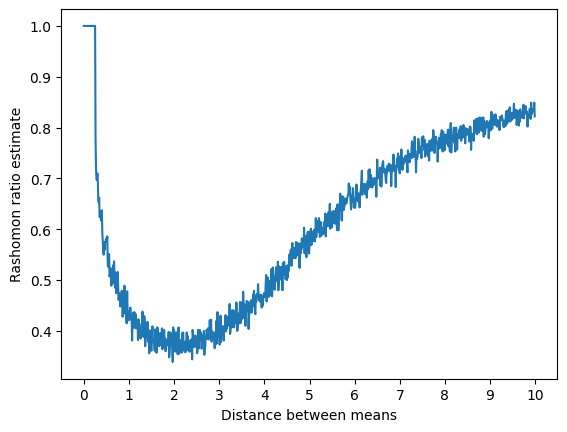}
         \caption{The distance $2\mu \in [0,10]$.}
         \label{ratio_means_narrower_one_dim}
     \end{subfigure}
     \caption{Rashomon ratio as a function of the distance between the means in the Gaussian mixture. The parameters are set to ${d}={1}$, $\sigma=1$ and $\gamma=0.05$. We estimate the Rashomon ratio using $1000$ functions from $\mathcal{F}_{af}$ randomly generated following a Uniform distribution on the circle as in Figure \ref{one_dim_circle}. The proportion of functions whose value according to \ref{zero_com} is smaller than $\gamma$ corresponds to the value of the Rashomon ratio in the plots. The minimum of the function on the left is when the distance between the means is $2\mu = 2$, where Rashomon ratio is approximately $0.355$. The minimum of the function on the right is at $2\mu = 1.97$, where Rashomon ratio is approximately $0.339$. According to Lemma \ref{ratio_hoeffding} these approximations of Rashomon ratio are within an error of $\varepsilon = 0.05$ to the true Rashomon ratio with probability at least $98\%$ since $N=1000$.}
\end{figure}

\subsection{Higher-dimensional case}

In a more general setup when we consider a mixture of Gaussians in $d$ dimensions, an analogous version of Lemma \ref{properties_rashomon_Ratio} can be formulated. Let us assume that the mixture of Gaussians in higher dimensional space is again antipodal: $\bs{\mu}_1=-\bs{\mu}_2=\bs{\mu}$. Then the reducible error is 
\begin{align*}
 \mathcal{E}_{\bs{\mu},\sigma}(\bs{p},t) = \Phi\left(\frac{\|\bs{\mu}\|}{\sigma}\right)-\zeta\Phi\left(\frac{\max(\bs{p}\cdot\bs{\mu},-\bs{p}\cdot \bs{\mu})-t}{\sigma\|\bs{p}\|}\right)-(1-\zeta)\Phi\left(\frac{t-\min(\bs{p}\cdot\bs{\mu},-\bs{p}\cdot\bs{\mu})}{\sigma\|\bs{p}\|}\right),
 \end{align*}
 which is equal to 
 \begin{align*}
 \Phi\left(\frac{\|\bs{\mu}\|}{\sigma}\right)-\zeta\Phi\left(\frac{\mid \bs{p}\cdot\bs{\mu}\mid-t}{\sigma\|\bs{p}\|}\right)-(1-\zeta)\Phi\left(\frac{t+\mid \bs{p}\cdot\bs{\mu}\mid }{\sigma\|\bs{p}\|}\right).
\end{align*}

Here, as in the one-dimensional case, the optimal classifier is a hyperplane and it belongs to $\mc{F}_{af}$, therefore $\inf_{f\in \mc{F}}L(f)$ is the Bayes error of this classification problem and the Rashomon ratio can be expressed as 
\begin{align*}
    \mathcal{R}_{\text{ratio}}(\|\bs{\mu}\|, \sigma, \gamma) = \int_{(\bs{p},t) \in S^{d}}\frac{1}{{\mathcal{V}(S^{d})}} \mathbbm{1}[\mathcal{E}_{\bs{\mu},\sigma}(\bs{p},t) \leq \gamma] d(\bs{p},t).
\end{align*}

The analytic properties of the Rashomon ratio as a function of the distance between the means are similar in the high-dimensional settings as in the one-dimensional one as is illustrated by Figures \ref{dimension_2} and \ref{dimension_10}. We provide Lemma \ref{properties_of_the_rashomon_ratio_multidim} that extends the results of Lemma \ref{properties_rashomon_Ratio} to the multidimensional case. 

\begin{figure}
     \centering
     \begin{subfigure}{0.45\textwidth}
         \includegraphics[width=\textwidth]{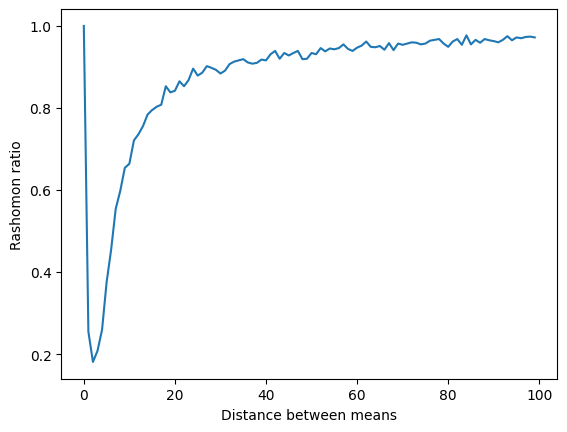}
         \caption{The distance $2\|\bs{\mu}\|\in[0,100]$.}
         \label{ratio_means_wider_two_dim}
     \end{subfigure}
     \begin{subfigure}{0.45\textwidth}
         \includegraphics[width=\textwidth]{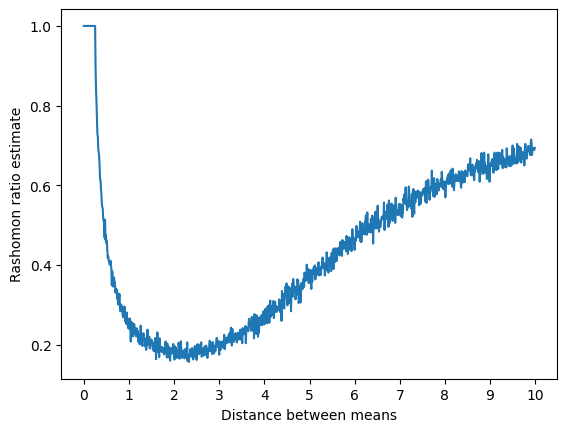}
         \caption{The distance $2\|\bs{\mu}\|\in[0,10]$.}
         \label{ratio_means_narrower_two_dim}
     \end{subfigure}
     \caption{Rashomon ratio as a function of the distance between the means in the Gaussian mixture. The parameters are set to $d=2$, $\sigma=1$ and $\gamma=0.05$. Rashomon ratio is estimated as a proportion of $1000$ functions generated randomly according to a Uniform distribution on a sphere which have an reducible error from (\ref{excess_error_multi}) smaller than $\gamma$. The minimum of the function on the left is when the distance between the means is $2\|\bs{\mu}\| = 2$, where Rashomon ratio is approximately $0.181$. The minimum of the function on the right is at $2\|\bs{\mu}\| = 2.33$, where  the Rashomon ratio is approximately $0.157$. According to Lemma \ref{ratio_hoeffding} these Rashomon ratio estimates are within an error of $\varepsilon = 0.05$ to the true Rashomon ratio with probability at least $98\%$ since $N=1000$.}
     \label{dimension_2}
\end{figure}

\begin{figure}[ht!]
     \centering
     \begin{subfigure}{0.45\textwidth}
         \includegraphics[width=\textwidth]{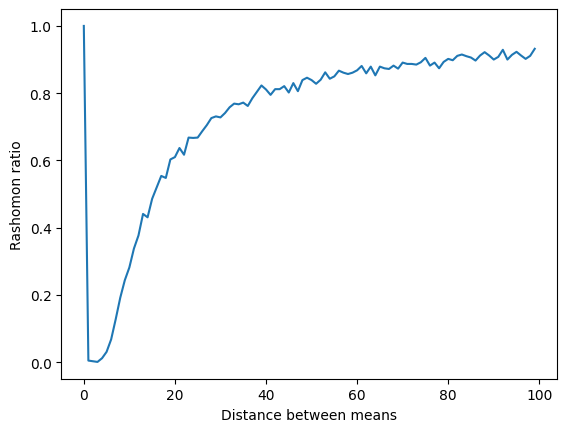}
         \caption{The distance $2\|\bs{\mu}\|\in[0,100]$.}
         \label{ratio_means_wider_ten_dim}
     \end{subfigure}
     \begin{subfigure}{0.45\textwidth}
         \includegraphics[width=\textwidth]{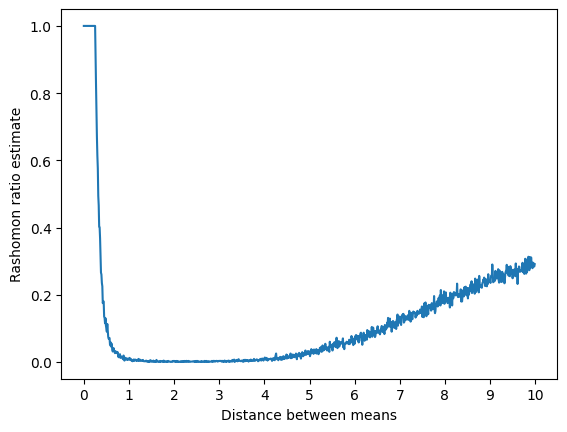}
         \caption{The distance $2\|\bs{\mu}\|\in[0,10]$.}
         \label{ratio_means_narrower_ten_dim}
     \end{subfigure}
     \caption{Rashomon ratio as a function of the distance between the means in the Gaussian mixture. The parameters are set to ${d}={10}$, $\sigma=1$ and $\gamma=0.05$. Rashomon ratio is estimated as a proportion of $1000$ functions generated randomly according to a Uniform distribution on a sphere which have an reducible error from (\ref{excess_error_multi}) smaller than $\gamma$. The minimum of the function on the left is when the distance between the means is  $2\|\bs{\mu}\| = 3$, where Rashomon ratio is approximately $0.001$. The minimum of the function on the right is at $2\|\bs{\mu}\| = 1.43$, where Rashomon ratio is approximately $0$ (according to Theorem \ref{properties_of_the_rashomon_ratio_multidim} Rashomon ratio is never $0$, but since we are making the approximation based on $1000$ functions only we are making a certain error -  according to Lemma \ref{ratio_hoeffding} there is an error less or equal to  $\varepsilon = 0.05$ with probability at least $98\%$ since $N=1000$.}
     \label{dimension_10}
\end{figure}

\newpage

\begin{lemma}\label{properties_of_the_rashomon_ratio_multidim}(properties of the Rashomon ratio for the case of a mixture of multidimensional Gaussians)
Let us consider a distribution consisting of a mixture of two Gaussians in $\mathbb{R}^d$ with means $\bs{\mu}$ and $-\bs{\mu}$, respectively (antipodal case), and covariance matrices $\sigma^2\mathbb{I}_d$ for some $\sigma>0$. Let us assume that the priors are equal: $\zeta_1 = \zeta_2 = 1/2$. Let us also assume that the classifier comes from a family of affine classifiers in $\mathbb{R}^d$ denoted by $\mathcal{R}_{\text{af}}$. Let $\gamma>0$, ratio $\mathcal{R}_{\text{ratio}}(\|\bs{\mu}\|)=\mathcal{R}_{\text{ratio}}(\|\bs{\mu}\|,\sigma,\gamma)$ 
on $\mathcal{F}_{\text{af}}$ (as a function of $\|\bs{\mu}\|$) has the following properties 
 \begin{itemize}
    \item[(i)] $\mathcal{R}_{\text{ratio}}(\|\bs{\mu}\|)$ is bounded on the interval $[0,1]$ and if $\|\bs{\mu}\|=0$ then $\mathcal{R}_{\text{ratio}}(0)=1$,
    
    \item[(ii)] $\mathcal{R}_{\text{ratio}}(\|\bs{\mu}\|)$ is continuous for $\|\bs{\mu}\| \in (0, \infty)$,
    
    \item[(iii)] $\lim_{\|\bs{\mu}\|\rightarrow \infty}\mathcal{R}_{\text{ratio}}(\|\bs{\mu}\|) = 1$,
    
    \item[(iv)] $\mathcal{R}_{\text{ratio}}(\|\bs{\mu}\|)$ attains its global minimum in $(0, \infty)$,
    
    \item[(v)] the global minimum of $\mathcal{R}_{\text{ratio}}(\|\bs{\mu}\|)$ is strictly larger than $0$.
\end{itemize}
\end{lemma}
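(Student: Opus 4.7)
The plan is to parametrize $\mathcal{F}_{\text{af}}$ by the unit sphere $S^d \subset \mathbb{R}^{d+1}$ equipped with normalized surface measure, so that $\mathcal{R}_{\text{ratio}}(\|\bs{\mu}\|) = \frac{1}{\mathcal{V}(S^d)} \int_{S^d} \mathbbm{1}[\mathcal{E}_{\bs{\mu},\sigma}(\bs{p},t) \leq \gamma]\, d(\bs{p},t)$, and to mirror the five-step argument of Lemma \ref{properties_rashomon_Ratio}. By rotational symmetry of the surface measure and of $\mathcal{E}_{\bs{\mu},\sigma}$ under simultaneous rotations of $\bs{\mu}$ and $\bs{p}$, I may fix $\bs{\mu} = \mu \bs{e}_1$ where $\mu = \|\bs{\mu}\|$. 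The reducible error given by Lemma \ref{excess_linear_mixture} then depends on $(\bs{p},t) \in S^d$ only through $p_1$, $\|\bs{p}\|$, and $t$, and takes the form $\Phi(\mu/\sigma) - \tfrac{1}{2}\Phi(a_-) - \tfrac{1}{2}\Phi(a_+)$ with $a_{\pm} = (\mu|p_1| \pm t)/(\sigma\|\bs{p}\|)$.

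Properties (i), (iii), and (iv) are essentially direct translations of the one-dimensional proof. Boundedness in $[0,1]$ is immediate from the definition. At $\mu=0$ the two Gaussians coincide, so $\mathcal{E}_{\bs{0},\sigma}(\bs{p},t) = 0$ for every $(\bs{p},t)$ and $\mathcal{R}_{\text{ratio}}(0)=1$. For (iii), I would fix any $(\bs{p},t) \in S^d$ with $p_1 \neq 0$ and observe that both $a_{\pm} \to +\infty$ as $\mu\to\infty$, so $\mathcal{E}_{\mu\bs{e}_1,\sigma}(\bs{p},t) \to 1 - \tfrac{1}{2} - \tfrac{1}{2} = 0$; since $\{p_1=0\} \cup \{\bs{p}=\bs{0}\} \subset S^d$ has zero surface measure, the Dominated Convergence Theorem applied to the indicator yields $\mathcal{R}_{\text{ratio}}(\mu) \to 1$. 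Property (iv) then follows from boundedness, continuity on $(0,\infty)$ (granted by (ii)), and the limits $\mathcal{R}_{\text{ratio}}(0)=1$ and $\lim_{\mu\to\infty}\mathcal{R}_{\text{ratio}}(\mu)=1$: any infimum over $[0,\infty)$ is attained.

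For (ii), continuity at $\mu_0 \in (0,\infty)$ reduces, via Dominated Convergence applied to the indicator as $\mu \to \mu_0$, to showing that the level set $\{(\bs{p},t) \in S^d : \mathcal{E}_{\mu_0 \bs{e}_1,\sigma}(\bs{p},t) = \gamma\}$ has zero surface measure, since the indicator is continuous at each point off this level set whenever $\mathcal{E}$ is continuous in $\mu$ there. On each of the two open strata $\{p_1 > 0\}$ and $\{p_1 < 0\}$ of $S^d$, the reducible error is real-analytic and nonconstant in $(\bs{p},t)$, so its level sets on these connected open sets have surface measure zero; the discarded complement $\{p_1=0\}$ also has measure zero. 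For (v), let $\mu^{\star}$ be a minimizer furnished by (iv). If $\mu^{\star}=0$ then $\mathcal{R}_{\text{ratio}}(\mu^{\star})=1>0$. Otherwise, the Bayes-optimal hyperplane corresponds to the point $(\bs{e}_1, 0) \in S^d$ where $\mathcal{E}_{\mu^{\star}\bs{e}_1,\sigma}=0$; by continuity of $\mathcal{E}$ in a neighborhood of this point (which is not a pole of $S^d$), a spherical cap of positive surface measure around it consists of classifiers with error below $\gamma$, so $\mathcal{R}_{\text{ratio}}(\mu^{\star})>0$.

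The main obstacle I anticipate is the measure-zero level-set step inside (ii). Justifying real-analyticity requires care around the absolute value $|p_1|$ (which changes branch across $\{p_1=0\}$) and around the kink in the formula of Lemma \ref{excess_linear_mixture} where $\max$ and $\min$ swap; stratifying $S^d$ into the two open half-spheres handles both issues. A more robust fallback, which avoids any structural assumption on $\mathcal{E}$, is the classical fact that for a continuous random variable $\mathcal{E}(\bs{p},t)$ on the probability space $S^d$, the set of values $\gamma$ at which the level set has positive measure is countable; continuity of $\mathcal{R}_{\text{ratio}}(\cdot,\gamma)$ in $\mu$ then holds for all but countably many $\gamma$, and the exceptional $\gamma$ can be recovered from monotonicity in $\gamma$ together with continuity on a dense set of thresholds.
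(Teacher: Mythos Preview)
Your proof is correct and follows essentially the same five-part strategy as the paper: direct computation for (i), Dominated Convergence plus a measure-zero level-set claim for (ii) and (iii), the same extremal argument for (iv), and continuity of $\mathcal{E}$ near the Bayes point $(\bs{e}_1,0)$ for (v). Your version is in fact slightly more careful than the paper's in two places: you justify the measure-zero level set in (ii) via real-analyticity on the two open half-spheres (the paper simply asserts that $\{\mathcal{E}=\gamma\}$ has measure zero), and in (iii) you correctly restrict to $p_1\neq 0$ and discard the orthogonal set as a null set, whereas the paper invokes a ``reverse Cauchy--Schwarz (P\'olya--Szeg\H{o})'' inequality that does not actually hold when $\bs{p}\cdot\bs{\mu}=0$.
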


\begin{proof}
\begin{itemize}
\item[(i)] Rashomon ratio is always bounded between $0$ and $1$ since it signifies the proportion of functions that have a certain property. The reducible error for $\|\bs{\mu}\|=0$ is (note that when $\|\bs{\mu}\|=0$, $\bs{\mu}=\bs{0}$ as well) for any $\bs{p}$ and $t$ is
\begin{align*}
    \mathcal{E}_{\bs{\mu},\sigma}(\bs{p},t) = \Phi\left(\frac{\|\bs{\mu}\|}{\sigma}\right)-\frac{1}{2}\Phi\left(\frac{\max(\bs{p}\cdot\bs{\mu},-\bs{p}\cdot \bs{\mu})-t}{\sigma\|\bs{p}\|}\right)
    -\frac{1}{2}\Phi\left(\frac{t-\min(\bs{p}\cdot\bs{\mu},-\bs{p}\cdot\bs{\mu})}{\sigma\|\bs{p}\|}\right),
    \end{align*}
    which is equal to
    \begin{align*}
    \Phi(0) - 1/2 + 1/2\Phi\left(\frac{t}{\sigma\|\bs{p}\|}\right) - 1/2\Phi\left(\frac{t}{\sigma\|\bs{p}\|}\right) = \frac{1}{2}-\frac{1}{2}=0.
\end{align*}
Therefore every function from $\mathcal{F}_{\text{af}}$ belongs to the Rashomon set and $\mathcal{R}_{\text{ratio}}(0)=1$.

\item[(ii)] 
Due to the Dominated Convergence Theorem we have that 
\begin{align*}
     \lim_{\varepsilon\rightarrow 0} \mc{R}_{\text{ratio}}\left(\|\bs{\mu}\|+\varepsilon\right) = \int_{(\bs{p},t) \in S^{d}}\frac{1}{{\mathcal{V}(S^{d})}} \lim_{\varepsilon\rightarrow 0} \mathbbm{1}[\mathcal{E}_{\bs{\mu}+\varepsilon,\sigma}(\bs{p},t) \leq \gamma] d(\bs{p},t).
\end{align*}
If $\mathcal{E}_{\bs{\mu}+\varepsilon,\sigma}(\bs{p},t) < \gamma$ then 
\begin{align*}
    \lim_{\varepsilon\rightarrow 0} \mathbbm{1}[\mathcal{E}_{\bs{\mu}+\varepsilon,\sigma}(\bs{p},t) \leq \gamma] = \mathbbm{1}[\mathcal{E}_{\bs{\mu},\sigma}(\bs{p},t) \leq \gamma].
\end{align*}
And since the measure of the points $(\bs{p},t)\in S^{d}$ where $\mathcal{E}_{\bs{\mu}+\varepsilon,\sigma}(\bs{p},t) = \gamma$ is zero we get that 
\begin{align*}
\int_{(\bs{p},t) \in S^{d}}\frac{1}{{\mathcal{V}(S^{d})}} \lim_{\varepsilon\rightarrow 0} \mathbbm{1}[\mathcal{E}_{\bs{\mu}+\varepsilon,\sigma}(\bs{p},t) \leq \gamma] d(\bs{p},t) =\\
=\int_{(\bs{p},t) \in S^{d}}\frac{1}{{\mathcal{V}(S^{d})}} \mathbbm{1}[\mathcal{E}_{\bs{\mu},\sigma}(\bs{p},t) \leq \gamma] d(\bs{p},t),
\end{align*}
therefore
\begin{align*}
     \lim_{\varepsilon\rightarrow 0} \mc{R}_{\text{ratio}}\left(\|\bs{\mu}\|+\varepsilon\right) = \mc{R}_{\text{ratio}}\left(\|\bs{\mu}\|\right),
\end{align*}
which means that the Rashomon ratio is a continuous function of the distance between the means. 
\color{black}

\item[(iii)] When $\sigma>0$ is fixed and $\|\bs{\mu}\| \rightarrow \infty$, $\forall \bs{p} \in \mathbb{R}^d \setminus  \bs{0}$ it holds that  $\frac{\mid \bs{p}\cdot \bs{\mu}\mid}{\sigma\|\bs{p}\|} \rightarrow \infty$ as well (due to the reversed Cauchy-Schwartz inequality - Pólya and Szegö’s inequality - that says that there exists a constant $C$ such that $\|\bs{\mu}\|\leq C \bs{p}\cdot(\bs{\mu})/\|\bs{p}\| $). When $\bs{p}$ and $t$ are fixed,
the limit of the reducible error is 
    \begin{align*}
        \forall \bs{p} \in \mathbb{R}^d\setminus\{0\},t\in\mathbb{R}: \lim_{\|\bs{\mu}\|\rightarrow \infty}&\Phi\left(\frac{\|\bs{\mu}\|}{\sigma}\right)-\zeta\Phi\left(\frac{\mid \bs{p}\cdot\bs{\mu}\mid-t}{\sigma\|\bs{p}\|}\right)-\\
        &-(1-\zeta)\Phi\left(\frac{t+\mid \bs{p}\cdot\bs{\mu}\mid }{\sigma\|\bs{p}\|}\right) 
        =1-\zeta-(1-\zeta)=0.
    \end{align*}
    
 Hence
\begin{align}\label{limit_rash_ratio}
    \lim_{\|\bs{\mu}\|\rightarrow \infty} \mc{R}_{\text{ratio}}\left(\|\bs{\mu}\|\right) = \lim_{\|\bs{\mu}\|\rightarrow \infty}\int_{(\bs{p},t) \in S^{d}}\frac{1}{{\mathcal{V}(S^{d})}} \mathbbm{1}[\mathcal{E}_{\bs{\mu},\sigma}(\bs{p},t) \leq \gamma] d(\bs{p},t).
\end{align}
Due to the Dominated Convergence Theorem the right hand side of \ref{limit_rash_ratio} equals
\begin{align}\label{limit_rash_ratio_2}
    \int_{(\bs{p},t) \in S^{d}} \frac{1}{{\mathcal{V}(S^{d})}}\lim_{\|\bs{\mu}\|\rightarrow \infty}\mathbbm{1}[\mathcal{E}_{\bs{\mu},\sigma}(\bs{p},t) \leq \gamma] d(\bs{p},t),
\end{align}
and since the reducible error is always non-negative, it convergences to zero from the right of zero, therefore the right hand side of \ref{limit_rash_ratio_2} equals
\begin{align*}
    \int_{(\bs{p},t) \in S^{d}} \frac{1}{{\mathcal{V}(S^{d})}}\mathbbm{1}[\lim_{\|\bs{\mu}\|\rightarrow \infty}\mathcal{E}_{\bs{\mu},\sigma}(\bs{p},t) \leq \gamma] d(\bs{p},t) = 1.
\end{align*}
   
   \item[(iv)] If minimum of $\mathcal{R}_{\text{ratio}}(\|\bs{\mu}\|)$ is not attained, then
   \begin{align}
   \inf_{\|\bs{\mu}\|\in(0, \infty)}\mathcal{R}_{\text{ratio}}(\|\bs{\mu}\|) = \lim_{\|\bs{\mu}\| \rightarrow \infty} \mathcal{R}_{\text{ratio}}(\|\bs{\mu}\|).
   \end{align}
   Due to (iii) $\lim_{\|\bs{\mu}\| \rightarrow \infty} \mathcal{R}_{\text{ratio}}(\|\bs{\mu}\|) = 1$, therefore there exists at least one point $\|\bs{\mu}\|^{\star} \in [0, \infty)$ such that $\|\bs{\mu}\|^{\star} = \min_{\|\bs{\mu}\|\in [0,\infty)}\mathcal{R}_{\text{ratio}}(\|\bs{\mu}\|) = \inf_{\|\bs{\mu}\|\in [0,\infty)}\mathcal{R}_{\text{ratio}}(\|\bs{\mu}\|)$, which is a global minimum.

    \item[(v)] Let us take the limit of an reducible error, when $\bs{p} \rightarrow \bs{\mu}$ and $t \rightarrow 0$: 
    \begin{align*}
          \lim_{\bs{p} \rightarrow \bs{\mu},\; t \rightarrow 0}\mathcal{E}_{\bs{\mu},\sigma}(\bs{p},f) = \lim_{\bs{p} \rightarrow \bs{\mu},\; t \rightarrow 60} &\Phi\left(\frac{\|\bs{\mu}\|}{\sigma}\right)-
          \frac{1}{2}\Phi\left(\frac{\mid \bs{p}\cdot\bs{\mu}\mid-t}{\sigma\|\bs{p}\|}\right)-\frac{1}{2}\Phi\left(\frac{t+\mid\bs{p}\cdot\bs{\mu}\mid}{\sigma\|\bs{p}\|}\right) = \\
          =&\Phi\left(\frac{\|\bs{\mu}\|}{\sigma}\right)-
          \frac{1}{2}\Phi\left(\frac{\| \bs{\mu}\|^2}{\sigma\|\bs{\mu}\|}\right)-\frac{1}{2}\Phi\left(\frac{\| \bs{\mu}\|^2}{\sigma\|\bs{\mu}\|}\right) = 0 
    \end{align*}
Therefore, since function $\mathcal{E}_{\bs{\mu}, \sigma}(\bs{p},t)$ is continuous, for each $\gamma>0$ there exists an open neighborhood of $(\bs{\mu}, 0)$ in a form of a hypersphere with radius $\eta_{\gamma}$, such that if $(\bs{p},t) \in \mathcal{U}_{\eta_{\gamma}}(\bs{\mu},0)$ the reducible error is larger than $\gamma$. Therefore for all $\gamma>0$ the Rashomon ratio is strictly larger than $\eta_{\gamma}$.
\end{itemize}
\end{proof}

\section{Rashomon ratio of a two-layer neural network applied to dataset with positive-definite Gram matrix} \label{twolayernn}

In this section, we consider the family of classifiers $\mathcal{F}$ consisting of two-layer neural networks with ReLU (rectified linear unit) activation functions. These can be written as 
\begin{align*}
    f_{\bs{W},\bs{a}}(\bs{x}) = \frac{1}{\sqrt{m}} \sum_{r=1}^ma_r\sigma(\bs{w}_r^T\bs{x}),
\end{align*}
where $\bs{x}\in \mathbb{R}^d$, $\bs{w}_1,...\bs{w}_m \in \mathbb{R}^d$ are the weight vectors in the first layer and $a_1,...a_m \in \mathbb{R}$ are the weights in the second layer. Following the setting of \cite{arora2019finegrained}, we assume that the neural network is trained by randomly initiated gradient descent on the quadratic loss over the data $\mc{S}$. More specifically, the parameters are initialized as
\begin{align*}
    \bs{w}_r(0) \sim \mathcal{N}(\bs{0}, \kappa^2 \bs{I}),\; a_r\sim \text{Unif}(\{-1,1\})\;\; \forall r\in [m].
\end{align*}
The second layer is then fixed and the first layer $\bs{W}$ is optimized through Gradient Descent with step size $\eta$ on the following objective function:
\begin{align*}
    \Phi(\bs{W}) = \frac{1}{2} \sum_{i=1}^n \left(y_i-f_{\bs{W,\bs{a}}}(x_i)\right)^2.
\end{align*}
The training is done on a labeled dataset with $n$ samples $\mathcal{S} = \{(\bs{x}_i,y_i)\}_{i=1}^{n}$  chosen i.i.d. following a distribution $\mathcal{D}$. We assume that $\|\bs{x}_i\|=1$ and $\mid y_i\mid \leq 1$.

 We capture the structure of the sample set with the Gram matrix $\bs{H}^{\infty}\in {\mathbb R}^{n\times n}$ as in \cite{arora2019finegrained},
 which is defined as
    \begin{align}
    \begin{split}
    \bs{H}^{\infty}_{i,j} &= E_{\bs{w}\sim \mathcal{N}(\bs{0},\bs{I})} [\bs{x}_i^T\bs{x}_j\mathbbm{1}\{\bs{w}^T\bs{x}_i\geq 0 ,\bs{w}^T\bs{x}_j\geq 0\}]=\\
    &= \frac{\bs{x}_i^T\bs{x}_j(\pi-\arccos(\bs{x}_i^T\bs{x}_j))}{2\pi} \;\;\;\;\; \forall i,j\in [n].
    \end{split}
    \end{align}
Assume that  $\bs{H}^{\infty}$ is positive-definite, so its smallest eigenvalue $\lambda_0$ is positive.

The hypothesis set $\mathcal{F}$ corresponds to the set of the weights in the neural network:
\begin{align*}
    \mc{F} = \{(\bs{W}, \bs{a}), \bs{W} \in \mathbb{R}^{d\times m}, \bs{a} \in \mathbb{R}^m\}.
\end{align*}
The weights $\bs{a}$ for the second layer are fixed after random initialization, and the generalization properties of the final classifier do not depend on the specific values of these weights. Thus we consider $\mathcal{F}^{\bs{a}}$, a subset of $\mathcal{F}$ consisting of all functions with weights $\bs{a}$ fixed. 

Since the rows are generated according to $\bs{w}_r(0)\sim \mathcal{N}(\bs{0}, \kappa^2 \bs{I}_d)$ independently, the initial weights $\bs{W}(0)$ are chosen according to a Gaussian distribution $\mathcal{N}(\bs{0}, \kappa^2\bs{I}_{d\times m})$, where \mbox{$0< \kappa^2 \leq 1$} controls the magnitude of the initialization. This defines a probability measure $\rho_{\bs{a}}(f)$ on $\mathcal{F}_{\bs{a}}$. Using the probability model used to generate $\bs{a}$ (uniform distribution over $\{-1,1\}^m$), this probability measure can be extended to a probability measure $\rho$ on $\mathcal{F}$. In this section,  we compute a lower bound on the (empirical anchored) Rashomon ratio for $\mathcal{F}_{\bs{a}}$ as defined by $\rho_{\bs{a}}(f)$. These bounds hold for all $\bs{a}\in \{ -1, 1\}^{m}$, and are independent of the specific value of $\bs{a}$.~ Therefore they can be extended to bounds on the Rashomon ratio for the entire $\mc{F}$ defined by $\rho$. 

\subsection{Rashomon set contains an $\varepsilon$-net for the hypothesis family}
As a first step, we use the optimization and generalization results of \cite{arora2019finegrained}
to show that, for any $\gamma>0$, the (empirical) Rashomon set $\mathcal{R}_{\text{set}}(\mc{F},\gamma)$ contains an $\varepsilon$-cover encompassing almost all of the volume of $\mc{F}$, 
provided that the noise level on the initial weights is small enough and the number of nodes in the network is large enough. The size of $\varepsilon$ depends on the complexity of the dataset as measured by the projection of the data labels on the eigenvectors of the Gram matrix: $\bs{y}^T (\bs{H}^{\infty})^{-1}\bs{y}$.

\begin{lemma}
\label{epsilon_covering_h}
        Let us assume that for some small $\bar{\epsilon},\delta>0$, the coefficient $\kappa$ satisfies \mbox{$\kappa = O\left(\frac{\bar{\epsilon}\delta}{n}\right)$}
        and the number of nodes $m$ in the hidden layer satisfies $m =\Omega\left(\frac{n^7}{\lambda_0^4\kappa^2\delta^4\bar{\epsilon}^2}\right)$.  
    Then there exists a subset $\mathcal{F}_{\delta} \subset \mathcal{F}$
    with volume at least $1-\delta$, such that for all $\gamma> \bar{\epsilon}^2/n$ the empirical (anchored) Rashomon set ${\hat{\mc{R}}}_{\text{set}}^{\text{anc}}(\mc{F}, \gamma)$ contains an $\varepsilon$-cover of $\mathcal{F}_{\delta}$. The value of $\varepsilon$ does not depend on $\gamma$ and can be expressed as:
    \begin{align} \label{epsilon_formula}
        \varepsilon = \sqrt{\bs{y}^T(\bs{H}^{\infty})^{-1}\bs{y}} +O\left(\frac{n\kappa}{\lambda_0\delta}\right)+\frac{poly(n,\lambda_0^{-1},\delta^{-1})}{n^{1/4}\kappa^{1/2}},
    \end{align}
    where $\sqrt{\bs{y}^T(\bs{H}^{\infty})^{-1}\bs{y}}$ is the dominant term.  
\end{lemma}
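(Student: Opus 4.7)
The plan is to import two quantitative facts from Arora et al.~\cite{arora2019finegrained} about gradient descent on this same setup, and to observe that together they immediately produce a large subset of $\mathcal{F}$ each of whose points is close to a trained classifier that lies in $\hat{\mc{R}}_{\text{set}}^{\text{anc}}(\gamma)$. The high-level idea is that if gradient descent started at $\bs{W}(0)$ converges to some $\bs{W}^{\infty}$ with training loss at most $\bar{\varepsilon}^2/n$, and if $\|\bs{W}^{\infty}-\bs{W}(0)\|_F\le \varepsilon$, then the map $\bs{W}(0)\mapsto \bs{W}^{\infty}$ sends each good initialization to a point of the Rashomon set within distance $\varepsilon$, so its image is an $\varepsilon$-cover of the good initializations contained in $\hat{\mc{R}}_{\text{set}}^{\text{anc}}(\gamma)$.

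First I would invoke the optimization theorem of \cite{arora2019finegrained}: under the given scaling $\kappa = O(\bar{\varepsilon}\delta/n)$ and $m = \Omega(n^7/(\lambda_0^4 \kappa^2 \delta^4 \bar{\varepsilon}^2))$, with probability at least $1-\delta$ over the random initialization $\bs{W}(0)\sim \mathcal{N}(\bs{0},\kappa^2\bs{I})$ (with $\bs{a}$ fixed), gradient descent converges to $\bs{W}^{\infty}$ satisfying $\tfrac12\sum_i (y_i - f_{\bs{W}^{\infty},\bs{a}}(\bs{x}_i))^2 \le \bar{\varepsilon}^2/2$, hence $\hat L(f_{\bs{W}^{\infty},\bs{a}})\le \bar{\varepsilon}^2/n$. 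Let $\mathcal{F}_{\delta}$ be the set of initializations for which this good event holds; by construction $\rho(\mathcal{F}_{\delta})\ge 1-\delta$, and for every $\gamma > \bar{\varepsilon}^2/n$ the trained weights $\bs{W}^{\infty}$ lie in $\hat{\mc{R}}_{\text{set}}^{\text{anc}}(\mc{F},\gamma)$.

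Second, I would invoke the trajectory-length bound from the same paper (this is the estimate used in their generalization analysis): on the good event, the distance between the initial and final weights is controlled by
\begin{align*}
\|\bs{W}^{\infty}-\bs{W}(0)\|_F \;\le\; \sqrt{\bs{y}^T(\bs{H}^{\infty})^{-1}\bs{y}} \;+\; O\!\left(\frac{n\kappa}{\lambda_0\delta}\right) \;+\; \frac{\mathrm{poly}(n,\lambda_0^{-1},\delta^{-1})}{n^{1/4}\kappa^{1/2}} \;=:\; \varepsilon,
\end{align*}
where the first term is dominant, matching (\ref{epsilon_formula}). Composing the two facts, define $\Psi:\mathcal{F}_{\delta}\to \hat{\mc{R}}_{\text{set}}^{\text{anc}}(\mc{F},\gamma)$ by $\Psi(\bs{W}(0))=\bs{W}^{\infty}$; then $\Psi(\mathcal{F}_{\delta})\subset \hat{\mc{R}}_{\text{set}}^{\text{anc}}(\mc{F},\gamma)$ and, by the trajectory bound, every $\bs{W}(0)\in\mathcal{F}_{\delta}$ lies within Frobenius-distance $\varepsilon$ of $\Psi(\bs{W}(0))$. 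Hence $\Psi(\mathcal{F}_{\delta})$ is an $\varepsilon$-cover of $\mathcal{F}_{\delta}$ sitting entirely inside the empirical anchored Rashomon set, which is exactly the claim.

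The only real obstacle is bookkeeping: the statement in \cite{arora2019finegrained} is phrased for generalization, bounding a Rademacher complexity in terms of $\sqrt{\bs{y}^T(\bs{H}^{\infty})^{-1}\bs{y}/(2n)}$, but their proof proceeds via the weight-movement estimate displayed above, so I would extract the intermediate bound verbatim and verify that the $O(n\kappa/(\lambda_0\delta))$ and $\mathrm{poly}(n,\lambda_0^{-1},\delta^{-1})/(n^{1/4}\kappa^{1/2})$ error terms line up under the hypotheses on $\kappa$ and $m$. A secondary check is that the "good event" (the intersection of the optimization and concentration events used in \cite{arora2019finegrained}) is measurable with respect to $\rho_{\bs{a}}$ and has measure at least $1-\delta$, which is immediate since $\rho_{\bs{a}}$ is exactly the Gaussian initialization distribution under which their probabilistic statements are made; extending to $\rho$ on $\mc{F}$ by averaging over $\bs{a}\in\{-1,1\}^m$ is automatic because the bounds hold for every $\bs{a}$.
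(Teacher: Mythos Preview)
Your proposal is correct and follows essentially the same route as the paper: invoke Arora et al.'s optimization result (their Theorem 4.1) to get small empirical loss after training, invoke their trajectory bound (their Lemma 5.3) to control $\|\bs{W}(k)-\bs{W}(0)\|_F$ by the displayed $\varepsilon$, intersect the two high-probability events to define $\mathcal{F}_\delta$, and then average over $\bs{a}\in\{-1,1\}^m$. The only cosmetic difference is that the paper stops gradient descent at a finite iterate $k_0$ (chosen so the error drops below $\gamma$) rather than passing to a limit $\bs{W}^\infty$; since the trajectory bound in \cite{arora2019finegrained} holds uniformly in $k$, this changes nothing.
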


\begin{proof}
 Let $\gamma > \bar{\epsilon}^2/n$. Let us fix $\bs{a}=\bs{a}_0 \in \{-1,1\}^{\times m}$ and consider neural networks $\mathcal{F}^{\bs{a}}$ whose second layer has weights equal to $\bs{a}_0$. From Theorem 4.1 in \cite{arora2019finegrained}, we know the following: if $\kappa=O\left(\frac{\bar{\varepsilon}\delta}{\sqrt{n}}\right)$, $m=\Omega\left(\frac{n^7}{\lambda_0^4\kappa^2\delta^4\bar{\varepsilon}^2}\right)$ and $\eta=O\left(\frac{\lambda_0}{n^2}\right)$ then the empirical error of the two-layer network under a mean squared error loss function can be expressed as
    \begin{align}\label{emp_error_h}
    \text{empir.error}(k) = \frac{1}{n}\sum_{i=1}^{n}\left(y_i-f_{\bs{W}(k),\bs{a}}(x_i)\right)^2 = \frac{1}{n}\left(\sqrt{\sum_{i=1}^{n}\left(1-\eta\lambda_i\right)^{2k}\left(\bs{v}_i\bs{y}\right)^2} \pm \bar{\epsilon}\right)^2,
    \end{align}
    where $\lambda_i$ and $\bs{v}_i$ are the eigenvalues and corresponding eigenvectors of $\bs{H}^{\infty}$. 
  Since $\bs{H}^{\infty}$ is assumed to be positive-definite,
    according to \cite{du2019gradient} $\sum_{i=1}^{n}\left(1-\eta\lambda_i\right)^{2k}\left(\bs{v}_i\bs{y}\right)^2$ converges to $0$, therefore there exists $k_0$, such that the empirical error after $k_0$ iterations is less than $\gamma$.

    Lemma 5.3 in \cite{arora2019finegrained} states that the neurons of the network do not move much overall: if the width of the inner layer $m$ is sufficiently large ($m \geq \kappa^{-2}\text{poly}(n,\lambda_0^{-1}, \delta^{-1})$) and the step size satisfies $\eta = O\left(\frac{\lambda_0}{n^2}\right)$, then with probability at least $1-\delta$ over the random initialization we have for all numbers of iterations $k \geq 0$:
    \begin{align} \label{total_movement_neurons}
    \|\bs{W}(k)-\bs{W}(0)\|_F \leq \sqrt{\bs{y}^T(\bs{H}^{\infty})^{-1}\bs{y}} +O\left(\frac{n\kappa}{\lambda_0\delta}\right)+\frac{poly(n,\lambda_0^{-1},\delta^{-1})}{n^{1/4}\kappa^{1/2}}.
    \end{align}

    Therefore there exists a subset $\mc{F}_{\delta}^{\bs{a}}\subset \mc{F}^{\bs{a}}$, with volume at least $1-\delta$,
    such that for each function $f\in \mc{F}_{\delta}^{\bs{a}}$ 
    represented as  $\bs{W}(0)$,   
    there exists a number of iterations $k_0$ and a set of parameters $\bs{W}(k_0)$ for which at the same time 
    \begin{itemize}
        \item[(i)] $\|\bs{W}(0)-\bs{W}(k_0)\|_F \leq \varepsilon$
        \item[(ii)] and $f_{\bs{W}(k_0), \bs{a}} \in \hat{\mc{R}}_{\text{set}}^{\text{anc}}(\mc{F}, \gamma)$.
    \end{itemize}
   Therefore 
   \begin{align*}
P\left(dist(f_{\bs{W},\bs{a}},\hat{\mc{R}}_{\text{set}}^{\text{anc}})>\varepsilon\mid \bs{a}=\bs{a}_0\right) \leq \delta
   \end{align*}
   and 
   \begin{align*}
P\left(dist(f_{\bs{W},\bs{a}},\hat{\mc{R}}_{\text{set}}^{\text{anc}})>\varepsilon, \; \bs{a}=\bs{a}_0\right) = P\left(dist(f_{\bs{W},\bs{a}},\hat{\mc{R}}_{\text{set}}^{\text{anc}})>\varepsilon\mid \bs{a}=\bs{a}_0\right)P\left(\bs{a}=\bs{a}_0\right) \leq \delta \frac{1}{2^m}
   \end{align*}
   and taking the union over all possible values for $\bs{a}$ we get 
   \begin{align*}    P\left(dist(f_{\bs{W},\bs{a}},\hat{\mc{R}}_{\text{set}}^{\text{anc}})>\varepsilon\right) = \sum_{i=1}^{2^m}P\left(dist(f_{\bs{W},\bs{a}},\hat{\mc{R}}_{\text{set}}^{\text{anc}})>\varepsilon, \; \bs{a}=\bs{a}_i\right) \leq 2^m \delta \frac{1}{2^m}=\delta. 
   \end{align*}
Therefore taking the union $\mc{F}_{\delta}=\bigcup_{a\in \{ 1,-1\}^m} \mc{F}_{\delta}^{\bs{a}}$ leads to an $\mc{F}_{\delta}$ inside $\mc{F}$ with volume at least $1-\delta$ 
and the required properties.
\end{proof}

\subsection{Lower bound for the Rashomon ratio}
We are interested in the lower bound of the volume of the (empirical) Rashomon set. In order to use Lemma \ref{epsilon_covering_h} to bound the volume of the Rashomon set we need to explore its continuity property.

\begin{lemma} \label{rashomon_ball_h}
    For a family of classifiers $\mathcal{F}$ given by two-layer neural networks we have: 
    \begin{align*}
        \text{if  } f_{\bs{W^1},\bs{a}}\in \hat{\mathcal{R}}_{\text{set}}(\gamma/2, \mathcal{F}) \;\; \text{then for all } \bs{W^2} \in B\left(\bs{W^1}, \frac{\gamma}{2}
        \right):\;f_{\bs{W^2,\bs{a}}}\in \hat{\mathcal{R}}_{\text{set}}(\gamma, \mathcal{F}). 
    \end{align*}
\end{lemma}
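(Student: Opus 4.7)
The network is an average over $m$ neurons with a $1/\sqrt{m}$ scaling, and ReLU is $1$-Lipschitz; the right quantity to pin down is the Lipschitz constant of the map $\bs{W}\mapsto f_{\bs{W},\bs{a}}(\bs{x})$ for a fixed $\bs{x}$ with $\|\bs{x}\|=1$, and then of $\bs{W}\mapsto \hat L(f_{\bs{W},\bs{a}})$. Once I have a $1$-Lipschitz bound in the Frobenius norm, the lemma is immediate: if $\bs{W^2}\in B(\bs{W^1},\gamma/2)$ and $\hat L(f_{\bs{W^1},\bs{a}})\le \inf_{\bs{W}}\hat L + \gamma/2$, then $\hat L(f_{\bs{W^2},\bs{a}})\le \hat L(f_{\bs{W^1},\bs{a}}) + \gamma/2 \le \inf_{\bs{W}}\hat L + \gamma$, which is exactly membership in $\hat{\mathcal{R}}_{\text{set}}(\gamma,\mathcal{F})$.

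\textbf{Step 1: pointwise Lipschitz bound on the network.} For any $\bs{x}$ with $\|\bs{x}\|=1$ and any $\bs{a}\in\{-1,1\}^m$, using $|a_r|=1$, the $1$-Lipschitz property of ReLU, Cauchy--Schwarz, and the $1/\sqrt{m}$ scaling,
\begin{align*}
|f_{\bs{W^1},\bs{a}}(\bs{x}) - f_{\bs{W^2},\bs{a}}(\bs{x})|
&\le \frac{1}{\sqrt{m}}\sum_{r=1}^{m}\bigl|\sigma(\bs{w}_r^{1\,T}\bs{x})-\sigma(\bs{w}_r^{2\,T}\bs{x})\bigr|\\
&\le \frac{1}{\sqrt{m}}\sum_{r=1}^{m}\|\bs{w}_r^{1}-\bs{w}_r^{2}\|\\
&\le \frac{1}{\sqrt{m}}\sqrt{m}\,\|\bs{W^1}-\bs{W^2}\|_F
= \|\bs{W^1}-\bs{W^2}\|_F.
\end{align*}
So $f$ is $1$-Lipschitz in $\bs{W}$ uniformly over unit-norm inputs and over $\bs{a}$.

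\textbf{Step 2: transfer to the empirical loss.} The empirical loss is a per-sample aggregate of the residual $y_i - f_{\bs{W},\bs{a}}(\bs{x}_i)$. Using the pointwise bound from Step 1 on each training point (they all satisfy $\|\bs{x}_i\|=1$ by assumption), the triangle inequality in the appropriate norm yields
\begin{equation*}
\bigl|\hat L(f_{\bs{W^1},\bs{a}}) - \hat L(f_{\bs{W^2},\bs{a}})\bigr|\;\le\; \|\bs{W^1}-\bs{W^2}\|_F.
\end{equation*}
Combining this with the Rashomon assumption gives the conclusion as described above.

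\textbf{Expected obstacle.} Step 1 is essentially mechanical. The delicate point is Step 2: the loss used in this section is the (root) mean squared error from Lemma \ref{epsilon_covering_h}, and the clean constant $1$ in the Lipschitz bound relies on taking the square root of the MSE rather than the MSE itself (or, equivalently, on the particular scaling between the $\gamma$-threshold for the Rashomon set and the Frobenius radius of the ball). I expect the proof to either (i) work with $\sqrt{\hat L}$ so that the triangle inequality in $\ell^2$ gives the $1$-Lipschitz constant directly, or (ii) absorb a loss-dependent constant into the radius. Either way, the architecture of the argument is the two-step Lipschitz bound above, and no new ingredients beyond $1$-Lipschitzness of ReLU, $|a_r|=1$, $\|\bs{x}_i\|=1$, and Cauchy--Schwarz are needed.
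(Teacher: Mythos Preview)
Your proposal is correct and follows the same two-step architecture as the paper's proof: first establish $|f_{\bs{W^1},\bs{a}}(\bs{x})-f_{\bs{W^2},\bs{a}}(\bs{x})|\le\|\bs{W^1}-\bs{W^2}\|_F$, then push this through the empirical loss. Your Step~1 is slightly cleaner than the paper's (you invoke the $1$-Lipschitz property of ReLU directly, while the paper reproves it via a three-case sign analysis; you use Cauchy--Schwarz for the sum-to-Frobenius step, the paper uses Jensen's inequality --- same estimate either way).

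Regarding your ``expected obstacle'' in Step~2: the paper resolves it not by passing to $\sqrt{\hat L}$ or absorbing constants, but simply by \emph{assuming} the loss $l$ is $1$-Lipschitz in its first argument (stated in the body of the proof rather than in the lemma). With that hypothesis the bound $|\hat L(f_{\bs{W^1},\bs{a}})-\hat L(f_{\bs{W^2},\bs{a}})|\le\|\bs{W^1}-\bs{W^2}\|_F$ is immediate, exactly as you wrote. So your worry is well founded --- the squared loss used elsewhere in the section is not globally $1$-Lipschitz --- but the paper sidesteps it by fiat rather than by either of the mechanisms you conjectured.
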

\begin{proof}
   Any function in $ \mathcal{F}$ can be written in terms of its parameters $\bs{W}$ and $\bs{a}$ as
    \begin{align*}
        f_{\bs{W,a}}(\bs{x}) = \frac{1}{\sqrt{m}}\sum_{i=1}^ma_i\sigma(\bs{w}_i^T\bs{x}).
    \end{align*}
    We now show that if the $\bs{W}$ parameters of two such functions with the same  $\bs{a}$ parameter are close, then the values of the functions are close as well. This way, we will be able to guarantee that any function in the Rashomon set corresponding to a certain error $\gamma$ is surrounded by a ball of functions that lie in Rashomon set corresponding to a slightly bigger $\tilde{\gamma}$.
    
    Let $\bs{W^1,a}$ such that $f_{\bs{W^1},\bs{a}}\in \hat{\mathcal{R}}_{\text{set}}(\gamma/2, \mathcal{F})$. 
    Consider any $\bs{W^2}$ with $\|\bs{W^1}-\bs{W^2}\|_F\leq \omega$. For all of the rows $i$ ,we have $\|\bs{w_i^1}-\bs{w_i^2}\| \leq \omega$. By the triangle inequality 
    \begin{align*}
         \mid f_{\bs{W^1,a}}(\bs{x})- f_{\bs{W^2,a}}(\bs{x})\mid  \leq \frac{1}{\sqrt{m}}\sum_{i=1}^m\mid a_i\mid \mid \sigma(\bs{w_i^1}^T\bs{x})-\sigma(\bs{w_i^2}^T\bs{x})\mid 
    \end{align*}
    There are three cases for $\mid \sigma(\bs{w_i^1}^T\bs{x})-\sigma(\bs{w_i^2}^T\bs{x})\mid $:
    \begin{itemize}
        \item both dot products are positive: $\bs{w_i^1}^T\bs{x}>0$ and $\bs{w_i^2}^T\bs{x}>0$ then 
        \begin{align*}
        \mid \sigma(\bs{w_i^1}^T\bs{x})-\sigma(\bs{w_i^2}^T\bs{x})\mid  = \mid \bs{w_i^1}^T\bs{x}-\bs{w_i^2}^T\bs{x}\mid  \end{align*}
        \item one dot product is positive, one negative, say $\bs{w_i^1}^T\bs{x}>0$ and $\bs{w_i^2}^T\bs{x} \leq 0$, then 
        \begin{align*}
        \mid \sigma(\bs{w_i^1}^T\bs{x})-\sigma(\bs{w_i^2}^T\bs{x})\mid  = \mid \bs{w_i^1}^T\bs{x}-0\mid < \mid \bs{w_i^1}^T\bs{x}-\bs{w_i^2}^T\bs{x}\mid \end{align*}
        \item both dot products are negative: $\bs{w_i^1}^T\bs{x} \leq 0$ and $\bs{w_i^2}^T\bs{x} \leq 0$,
        then 
        \begin{align*}
        \mid \sigma(\bs{w_i^1}^T\bs{x})-\sigma(\bs{w_i^2}^T\bs{x})\mid  = 0 < \mid \bs{w_i^1}^T\bs{x}-\bs{w_i^2}^T\bs{x}\mid.  
        \end{align*}
    \end{itemize}
    In all these cases $\mid \sigma(\bs{w_i^1}^T\bs{x})-\sigma(\bs{w_i^2}^T\bs{x})\mid  \leq \mid \bs{w_i^1}^T\bs{x}-\bs{w_i^2}^T\bs{x}\mid $. Using the Cauchy-Schwartz inequality we get $\mid \bs{w_i^1}^T\bs{x}-\bs{w_i^2}^T\bs{x}\mid  \leq \|\bs{w^1}_i-\bs{w^2}_i\| \|\bs{x}\|$, therefore 
  \begin{align*}
        \mid f_{\bs{W^1,a}}(\bs{x})- f_{\bs{W^2,a}}(\bs{x})\mid  \leq \frac{1}{\sqrt{m}}\sum_{i=1}^m\mid a_i\mid \|\bs{w^1}_i-\bs{w^2}_i\| \|\bs{x}\|
    \end{align*}
    using the assumption that $\|\bs{x}\|=1$ and $\mid a_i\mid =1$ we get 
    \begin{align*}
        \mid f_{\bs{W^1,a}}(\bs{x})- f_{\bs{W^2,a}}(\bs{x})\mid  \leq \frac{1}{\sqrt{m}}\sum_{i=1}^m\|\bs{w^1}_i-\bs{w^2}_i\|,
    \end{align*}
    by Jensen's inequality (for real convex function $\phi$ it holds that $\phi(\frac{1}{n}\sum_{i=1}^{n}X_i)\leq \frac{1}{n}\sum_{i=1}^{n}\phi(x)$ applied to a convex function $\phi(x)=x^2$ leading to $\frac{1}{n}\sum_{i=1}^n X_i \leq \sqrt{\frac{1}{n}\sum_{i=1}^{n}X_i^2}$) we get:
       \begin{align*}
        \frac{1}{\sqrt{m}}\sum_{i=1}^m\|\bs{w^1}_i-\bs{w^2}_i\| \leq \sqrt{m}\sqrt{\frac{1}{m}\sum_{i=1}^m\|\bs{w^1}_i-\bs{w^2}_i\|^2} = \|\bs{W}^{1}-\bs{W}^{2}\|_F,
    \end{align*}
    which means
    \begin{align*}
        \mid f_{\bs{W^1,a}}(\bs{x})- f_{\bs{W^2,a}}(\bs{x})\mid  \leq \|\bs{W^1}-\bs{W}^2\|_F.
    \end{align*}
    We have that $f_{\bs{W^1,a}}(\bs{x}) \in \mathcal{R}_{\text{set}}(\gamma/2,\mathcal{F})$, which means that the empirical loss of $f_{\bs{W^1,a}}(\bs{x})$ is less than $\gamma/2$:
    \begin{align*}
        \frac{1}{n}\sum_{i=1}^n l\left(f_{\bs{W^1,a}}(\bs{x}_i),y_i\right) \leq \gamma/2
    \end{align*}
    Let us compare the empirical losses of the functions $f_{\bs{W^1,a}}(\bs{x})$ and $f_{\bs{W^2,a}}(\bs{x})$:
    \begin{align*}
       \mid \frac{1}{n}\sum_{i=1}^n l\left(f_{\bs{W^1,a}}(\bs{x}_i),y_i\right) - \frac{1}{n}\sum_{i=1}^n l\left(f_{\bs{W^2,a}}(\bs{x}_i),y_i\right)\mid  = \mid  \frac{1}{n}\sum_{i=1}^n l\left(f_{\bs{W^1,a}}(\bs{x}_i),y_i\right)- l\left(f_{\bs{W^2,a}}(\bs{x}_i),y_i\right)\mid  \\
        \leq  \frac{1}{n}\sum_{i=1}^n \mid  l\left(f_{\bs{W^1,a}}(\bs{x}_i),y_i\right)- l\left(f_{\bs{W^2,a}}(\bs{x}_i),y_i\right) \mid
    \end{align*}
    and since we assume that the loss function is 1-Lipschitz in the first variable we get
\begin{align*}
        \mid \frac{1}{n}\sum_{i=1}^n l\left(f_{\bs{W^1,a}}(\bs{x}_i),y_i\right) - \frac{1}{n}\sum_{i=1}^n l\left(f_{\bs{W^2,a}}(\bs{x}_i),y_i\right)\mid  \leq \\
        \leq \frac{1}{n}\sum_{i=1}^n \mid (f_{\bs{W^1,a}}(\bs{x}_i) -f_{\bs{W^2,a}}(\bs{x}_i)\mid  \leq \|\bs{W^1}-\bs{W^2}\|_F.
    \end{align*}
    If we choose $\bs{W^2}$ such that $\|\bs{W^1}-\bs{W^2}\|_F < \gamma/2$ (meaning in a ball of radius $\gamma/2$ around $\bs{W^1}$) we get that the difference between empirical errors of neural network with weights $\bs{W^1}$ and neural network with weights $\bs{W^2}$ is less than $\gamma/2$, therefore
    \begin{align*}
        \mid\frac{1}{n}\sum_{i=1}^n l\left(f_{\bs{W^2,a}}(\bs{x}_i),y_i\right)\mid \leq \gamma.
    \end{align*}
    As a consequence if $ \bs{W^2} \in B(\bs{W^1},\gamma/2)$ then $f_{\bs{W^2},\bs{a}} \in \hat{\mathcal{R}}_{\text{set}}(\gamma, \mathcal{F})$. 
\end{proof}

\begin{lemma}\label{rashomon_ratio_h}
    Let us consider the same assumptions as in Lemma \ref{epsilon_covering_h} and let us also assume that for every $\bs{a}$, $\mathcal{F}_{\delta}^{\bs{a}} $ from Lemma \ref{epsilon_covering_h} includes a ball of volume $1-3\delta/2$ (w.r.t. $\rho$) around the origin. Additionally, let us assume that $\varepsilon>\gamma/2$. Then the empirical (anchored) Rashomon ratio $\hat{\mathcal{R}}_{\text{ratio}}^{\text{anc}}(\gamma, \mathcal{F})$ with respect to $\rho$ for the two-layered neural network (with $m$ hidden nodes) applied on a dataset with Gram matrix $\bs{H}^{\infty}$ and labels $\bs{y}$ is at least 
    \begin{align} \label{lower_bound_formula}
        \hat{\mathcal{R}}_{\text{ratio}}(\gamma, \mathcal{F}) \geq \dfrac{(1-\delta)2^{\frac{d \times m}{2}}}{\kappa \Gamma\left(\frac{d\times m}{2}+1\right)}\left(\frac{\gamma}{2}\right)^{\frac{d \times m}{2}}\exp\left(-\frac{\varepsilon^2}{2\kappa^2}\right)\left[ \frac{1}{2}\left(\theta_3\left(0,\exp\left(-\frac{9\varepsilon^2}{2\kappa^2}\right)\right)+1\right)\right]^{d \times m},
    \end{align}
    where $\theta_3(z,q)$ is a Jacobi Theta Function defined as $\theta_3(z,q) = \sum_{n=-\infty}^{\infty} q^{n^2} e^{2niz}$
    and $\varepsilon$ is again
    \begin{align*}
        \varepsilon = \sqrt{\bs{y}^T(\bs{H}^{\infty})^{-1}\bs{y}} +O\left(\frac{n\kappa}{\lambda_0\delta}\right)+\frac{poly(n,\lambda_0^{-1},\delta^{-1})}{n^{1/4}\kappa^{1/2}}.
    \end{align*}
\end{lemma}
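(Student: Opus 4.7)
The strategy is to combine Lemma \ref{epsilon_covering_h} with the continuity result of Lemma \ref{rashomon_ball_h} in order to plant a pairwise-disjoint family of $\gamma/2$-balls inside $\hat{\mathcal{R}}_{\text{set}}^{\text{anc}}(\mathcal{F},\gamma)$ along a cubic lattice placed inside the ball around the origin that is assumed to sit in $\mathcal{F}_{\delta}^{\bs{a}}$, and then to sum their Gaussian $\rho$-masses; the product structure of the Gaussian density across coordinates is what will produce the Jacobi theta-function factor.

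First I would apply Lemma \ref{epsilon_covering_h} with $\gamma/2$ in place of $\gamma$; since the expression (\ref{epsilon_formula}) for $\varepsilon$ does not depend on $\gamma$, the same $\mathcal{F}_{\delta}$ and the same $\varepsilon$ work, and $\hat{\mathcal{R}}_{\text{set}}^{\text{anc}}(\mathcal{F},\gamma/2)$ contains an $\varepsilon$-cover of $\mathcal{F}_{\delta}$. Lemma \ref{rashomon_ball_h} then guarantees that around each such cover point a full $\gamma/2$-ball sits inside $\hat{\mathcal{R}}_{\text{set}}^{\text{anc}}(\mathcal{F},\gamma)$. Consequently, for every $\bs{W}\in\mathcal{F}_{\delta}$ there is a point $\bs{W}^{*}$ with $\|\bs{W}-\bs{W}^{*}\|_{F}\leq\varepsilon$ and $B(\bs{W}^{*},\gamma/2)\subset\hat{\mathcal{R}}_{\text{set}}^{\text{anc}}(\mathcal{F},\gamma)$.

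Next, invoking the extra hypothesis that $\mathcal{F}_{\delta}^{\bs{a}}$ contains a ball around the origin of $\rho$-mass at least $1-3\delta/2$, I would place the cubic lattice $\{3\varepsilon\,\bs{n}:\bs{n}\in\mathbb{Z}_{\geq 0}^{d\times m}\}$ inside that ball; the hypothesis $\varepsilon>\gamma/2$ ensures that once each lattice point $3\varepsilon\,\bs{n}$ is shifted by at most $\varepsilon$ to its associated $\bs{W}^{*}_{\bs{n}}$, the resulting $\gamma/2$-balls remain pairwise disjoint. On each ball the Gaussian density of $\rho$ is bounded below by $(2\pi\kappa^{2})^{-dm/2}\exp(-(\|\bs{W}^{*}_{\bs{n}}\|_{F}+\gamma/2)^{2}/(2\kappa^{2}))$ using $\|\bs{W}^{*}_{\bs{n}}\|_{F}\leq 3\varepsilon\|\bs{n}\|_{2}+\varepsilon$. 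Multiplying this lower bound by the Euclidean volume of the ball, $\frac{\pi^{dm/2}}{\Gamma(dm/2+1)}(\gamma/2)^{dm}$, and summing over $\bs{n}$, the sum factorizes across the $d\times m$ coordinates into one-dimensional sums $\sum_{n\geq 0}\exp(-9\varepsilon^{2}n^{2}/(2\kappa^{2}))=\tfrac{1}{2}(\theta_{3}(0,\exp(-9\varepsilon^{2}/(2\kappa^{2})))+1)$ raised to the power $d\times m$, leaving the prefactor $\exp(-\varepsilon^{2}/(2\kappa^{2}))$ and a $(1-\delta)$ factor that accounts for the probability that the event underlying Lemma \ref{epsilon_covering_h} holds.

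The hard part is to control the cross-terms in the expansion of $(3\varepsilon\|\bs{n}\|_{2}+\varepsilon+\gamma/2)^{2}$ so that the prefactor $\exp(-\varepsilon^{2}/(2\kappa^{2}))$ and the lattice sum with exponent $-9\varepsilon^{2}\|\bs{n}\|_{2}^{2}/(2\kappa^{2})$ decouple cleanly; this forces one to use $\varepsilon>\gamma/2$ in an essential way in order to absorb the smaller mixed terms. A secondary subtlety is ensuring that the ball from the extra assumption is large enough to contain infinitely many lattice sites in the nonnegative octant, which is what ultimately permits the sum against the full Jacobi theta function, and tracking the precise combinatorial origin of the positive-octant restriction against the symmetry of the Gaussian measure.
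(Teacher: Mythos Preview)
Your overall strategy coincides with the paper's: invoke Lemma \ref{epsilon_covering_h} at level $\gamma/2$, upgrade each cover point to a $\gamma/2$-ball via Lemma \ref{rashomon_ball_h}, spread disjoint such balls along a $3\varepsilon$-lattice (disjointness following from $\varepsilon>\gamma/2$), lower-bound the Gaussian density on each ball by its value at the farthest point, and then factorize the lattice sum into a product of one-dimensional theta sums.

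However, a few of your details do not line up with the argument and would not reproduce the stated constants. First, the lattice in the paper runs over all of $\mathbb{Z}^{d\times m}$, not just the nonnegative octant; the factor $2^{d\times m}$ from the symmetry of this full lattice is what combines with $(2\pi)^{-dm/2}$ and the $\pi^{dm/2}$ in the ball volume to produce the $2^{dm/2}$ in (\ref{lower_bound_formula}). If you start in the nonnegative octant you lose that factor. Second, your attribution of the prefactor $(1-\delta)$ to ``the probability that the event underlying Lemma \ref{epsilon_covering_h} holds'' is off: Lemma \ref{epsilon_covering_h} is stated as a volume statement, and in the paper the corresponding prefactor (written there as $1-3\delta/2$) arises from discarding the contribution of lattice sites that fall \emph{outside} the ball $\mathcal{B}\subset\mathcal{F}_{\delta}^{\bs{a}}$, using that $\mathcal{B}$ carries $\rho$-mass at least $1-3\delta/2$ together with the rotational invariance of the Gaussian. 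Third, and relatedly, a Euclidean ball of finite radius contains only finitely many lattice sites, so your concern about ``the ball being large enough to contain infinitely many lattice sites'' cannot be the right formulation; the paper's device is instead to sum over the \emph{entire} infinite lattice (yielding the full theta function) and then argue that the portion of that sum attributable to sites outside $\mathcal{B}$ costs at most a multiplicative $3\delta/2$. Finally, regarding your ``hard part'': the paper does not attempt to control the cross term $2\varepsilon\cdot 3\varepsilon\|\bs{n}\|_2$; it simply uses $(3\varepsilon\|\bs{n}\|_2+\varepsilon)^2\ge 9\varepsilon^2\|\bs{n}\|_2^2+\varepsilon^2$ and drops the cross term, so the decoupling you anticipate is obtained by throwing away (not absorbing) that term --- though you should be aware that the direction of the resulting exponential inequality in the paper deserves scrutiny.
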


\begin{proof}
Let us first fix the value of ${\bs a}$ and first look for a lower bound for the Rashomon ratio for each $\mc{F}^{\bs{a}}$. Due to Lemma \ref{epsilon_covering_h} there exists a subset $\mc{F}^{\bs{a}}_{\delta}$ of $\mc{F}^{\bs{a}}$ with volume no less than $1-\delta$ for which functions in the Rashomon set form an $\varepsilon$-cover. We are assuming that $\mc{F}^{\bs{a}}_{\delta}$ contains a ball around the origin of a volume no less than $1-3\delta/2$. Let us call this ball $\mc{B}$; observe that the Rashomon set forms an $\varepsilon$-cover of $\mc{B}$ because it is a subset of $\mc{F}^{\bs{a}}_{\delta}$. 

Let us consider a regular grid of points in $d \times m$ dimensional space with step size $3\varepsilon$ in each dimension. The grid is a set of points 
\begin{align*}
G = \{\bs{p}_{i_1,i_2,...,i_{m\times d}} = [3\varepsilon i_1,3\varepsilon i_2, ..., 3\varepsilon i_{m\times d}], i_1,i_2,...,i_{m\times d}\in \mathbb{Z}\}.
\end{align*}
 
By Lemma \ref{epsilon_covering_h} there exists a function that belongs to the Rashomon set nearby each point in the grid that belong to $\mc{B}$ as well: $\forall \bs{p} \in G \cap \mc{B}$, $\exists f_{\bs{W}, \bs{a}} \in \hat{\mathcal{R}}_{\text{set}}(\gamma/2, \mathcal{F})$, such that $\|\bs{W}-\bs{p}\|_F < \varepsilon$. Due to the triangle inequality, the Rashomon points corresponding to different grid points are different.

By Lemma \ref{rashomon_ball_h} there is a ball of radius $\gamma/2$ around each Rashomon point that consists of functions from $\hat{\mathcal{R}}_{\text{set}}(\gamma, \mathcal{F})$. Let us denote by $B_{i_1,i_2,...i_{d \times m}}$ 
the ball that corresponds to point $\bs{p}_{i_{1},i_2,...i_{d \times m}}$ in the grid - its radius is $\gamma/2$. 

The volume of these balls (in the sense of $\rho$) depends on how far away they are from the origin, that is to say from the mean of the Gaussian used to randomly draw the functions in $\mc{F}$. 
For example, a ball of a given radius in the Euclidean sense will have a smaller volume if it is further away from the mean of the Gaussian. In order to obtain a lower bound on the Rashomon ratio, we assume that the 
balls are in the worst position - furthest away from the mean of the Gaussian. In each ball $B_{i_1,i_2,...,i_{d \times m}}$ 
we replace the values of $\rho_a(f)$ inside the ball by a constant equal to it's minimum value inside the ball. This way, the true volume of each ball will not be less than our estimate.

If we were not restricted by the set $\mc{B}$ and assumed that each point in the grid had its "Rashomon neighbor", we could lower-bound the volume of all of these "Rashomon balls" in space by:
    \begin{align} \label{lower_bound_ratio_h}
         \sum_{i_1=-\infty}^{\infty}\sum_{i_2=-\infty}^{\infty}...\sum_{i_{d \times m}=-\infty}^{\infty}  \frac{1}{(2\pi)^{\frac{d \times m}{2}}}\frac{1}{\kappa}e^{-\text{dist}^2_{i_1...i_{d\times m}}/(2\kappa^2)} \text{vol}_{d \times m}\left(\gamma/2\right),
    \end{align}
    where 
    \begin{align*}
        \text{dist}_{i_1,...i_{d \times m}} = \sqrt{\sum_{j=1}^{d \times m}\left(3\varepsilon i_j\right)^2}+\varepsilon
    \end{align*}
is the largest possible distance from the origin to a point in a Rashomon ball that belongs to the grid point $\bs{p}$ (note that these balls are not intersecting), and $\text{vol}\left(B_{d \times m}\left(\gamma/2\right)\right)$ is the volume of a ball in $\mathbb{R}^{d\times m}$ with radius $\gamma/2$. 

We have to be careful, because the points of the grid that do not belong to the set $\mc{B}$ may not be covered by the Rashomon set. Since the "Rashomon balls" are distributed evenly in space and a normal distribution is rotationally invariant, if we subtract all of the Rashomon balls that belong to some point on a grid outside of $\mc{B}$ from (\ref{lower_bound_ratio_h}), we subtract $3\delta/2$ of its volume. Therefore, since the grid is symmetric, we get a lower bound for the Rashomon ratio of $\mc{F}^{\bs{a}}$: 
    \begin{align*}
        \hat{\mathcal{R}}_{\text{ratio}}(\gamma, \mathcal{F}^{\bs{a}}) \geq (1-3\delta/2)2^{d \times m}\sum_{i_1=0}^{\infty}\sum_{i_2=0}^{\infty}...\sum_{i_{d \times m}=0}^{\infty}  \frac{1}{(2\pi)^{\frac{d \times m}{2}}}\frac{1}{\kappa}e^{-\text{dist}^2_{i_1...i_{d\times m}}/(2\kappa^2)} \text{vol}_{d \times m}\left(\gamma/2\right).
    \end{align*}
    Taking all terms that do not depend on the indexes $i_1,...i_{d \times m}$ out of the sum we get
    \begin{align*}
         \hat{\mathcal{R}}_{\text{ratio}}(\gamma, \mathcal{F}^{\bs{a}}) \geq (1-3\delta/2)2^{d \times m}\frac{1}{(2\pi)^{\frac{d \times m}{2}}}\frac{1}{\kappa} \text{vol}_{d \times m}\left(\gamma/2\right)\sum_{i_1=0}^{\infty}\sum_{i_2=0}^{\infty}...\sum_{i_{d \times m}=0}^{\infty}  e^{-\text{dist}^2_{i_1...i_{d\times m}}/(2\kappa^2)}.
    \end{align*}
    Using the formula for $d \times m$ dimensional ball we get the following expression for a lower bound of the Rashomon ratio:
    \begin{align*}
        \hat{\mathcal{R}}_{\text{ratio}}(\gamma, \mathcal{F}^{\bs{a}}) \geq (1-3\delta/2)\dfrac{2^{\frac{d \times m}{2}}}{\kappa \Gamma\left(\frac{d\times m}{2}+1\right)}\left(\frac{\gamma}{2}\right)^{\frac{d \times m}{2}}\sum_{i_1=0}^{\infty}\sum_{i_2=0}^{\infty}...\sum_{i_{d \times m}=0}^{\infty} e^{-\text{dist}^2_{i_1...i_{d\times m}}/(2\kappa^2)}.
    \end{align*}
    Let us now focus on lower bounding the sum:
    \begin{align*}
\sum_{i_1=0}^{\infty}\sum_{i_2=0}^{\infty}...\sum_{i_{d \times m}=0}^{\infty} e^{-\text{dist}^2_{i_1...i_{d\times m}}/(2\kappa^2)}.
    \end{align*}
    We have that
    \begin{align*}
       \text{dist}^2_{i_1,...i_{d \times m}} = \left(\sqrt{\sum_{j=1}^{d \times m}\left(3\varepsilon i_j\right)^2}+\varepsilon\right)^2 \geq {\sum_{j=1}^{d \times m}\left(3\varepsilon i_j\right)^2}+\varepsilon^2
    \end{align*}
    therefore 
    \begin{align*}
        e^{-\text{dist}^2_{i_1,...i_{d \times m}}} \geq e^{-{\sum_{j=1}^{d \times m}\left(3\varepsilon i_j\right)^2}+\varepsilon^2}
    \end{align*}
    hence we have
    \begin{align*}        \sum_{i_1=0}^{\infty}\sum_{i_2=0}^{\infty}...\sum_{i_{d \times m}=0}^{\infty} e^{-\text{dist}^2_{i_1...i_{d\times m}}/(2\kappa^2)} \geq e^{-\varepsilon^2/(2\kappa^2)}\prod_{j=1}^{d \times m}\sum_{i_j=0}^{\infty}e^{-(3\varepsilon i_j)^2/(2\kappa^2)}.
    \end{align*}
    and 
    \begin{align*}
        e^{-\varepsilon^2/(2\kappa^2)}\prod_{j=1}^{d \times m}\sum_{i_j=0}^{\infty}e^{-(3\varepsilon i_j)^2/(2\kappa^2)} = \exp\left(-\frac{\varepsilon^2}{2\kappa^2}\right)\left[ \frac{1}{2}\left(\theta_3\left(0,\exp\left(-\frac{9\varepsilon^2}{2\kappa^2}\right)\right)+1\right)\right]^{d \times m},
    \end{align*}
    where $\theta_2(z,q)$ is a Jacobi Theta Function defined as 
    \begin{align*}
        \theta_3(z,q) = \sum_{n=-\infty}^{\infty} q^{n^2} e^{2niz}.
    \end{align*}
    The resulting lower bound for the Rashomon ratio is therefore
    \begin{align}\label{final_result_before_delta}\begin{split}
        &\hat{\mathcal{R}}_{\text{ratio}}(\gamma, \mathcal{F}^{\bs{a}}) \geq\\
        &\geq(1-3\delta/2)\dfrac{2^{\frac{d \times m}{2}}}{\kappa \Gamma\left(\frac{d\times m}{2}+1\right)}\left(\frac{\gamma}{2}\right)^{\frac{d \times m}{2}}\exp\left(-\frac{\varepsilon^2}{2\kappa^2}\right)\left[ \frac{1}{2}\left(\theta_3\left(0,\exp\left(-\frac{9\varepsilon^2}{2\kappa^2}\right)\right)+1\right)\right]^{d \times m}.
    \end{split}
    \end{align}
    What about the Rashomon ratio of the entire set $\mc{F}$? We know that
    \begin{align}\label{cond_a_ratio}
        \hat{\mc{R}}^{\text{anc}}_{\text{ratio}}(\mc{F}, \gamma) = P\left(f_{\bs{W},\bs{a}}\in \hat{\mc{R}}_{\text{set}}^{\text{anc}}(\mc{F}, \gamma)\right)= \sum_{i=1}^{2^m} P\left(f_{\bs{W},\bs{a}}\in \mc{R}_{\text{set}}(\mc{F}^{\bs{a}_i}, \gamma),\; \bs{a}=\bs{a}_i\right),
        \end{align}
        which is equal to
        \begin{align*}
        \sum_{i=1}^{2^m} P\left(f_{\bs{W},\bs{a}}\in \mc{R}_{\text{set}}(\mc{F}^{\bs{a}_i}, \gamma)\mid \; \bs{a}=\bs{a}_i\right)P\left(\bs{a}=\bs{a}_i\right)=\frac{1}{2^m}\sum_{i}^{2^m}P\left(f_{\bs{W},\bs{a}}\in \mc{R}_{\text{set}}(\mc{F}^{\bs{a}_i}, \gamma)\mid \; \bs{a}=\bs{a}_i\right).
    \end{align*}
    We computed the lower bound for $P\left(f_{\bs{W},\bs{a}}\in \mc{R}_{\text{set}}(\mc{F}^{\bs{a}_i}, \gamma)\mid \; \bs{a}=\bs{a}_i\right)$ in \ref{final_result_before_delta} which is the same for every vector $\bs{a}_i$, therefore due to \ref{cond_a_ratio} the same bound holds for an overall Rashomon ratio:
    \begin{align*}
        &\hat{\mc{R}}_{\text{ratio}}^{\text{anc}}(\mc{F}, \gamma) \geq\\
        &\geq(1-3\delta/2)\dfrac{2^{\frac{d \times m}{2}}}{\kappa \Gamma\left(\frac{d\times m}{2}+1\right)}\left(\frac{\gamma}{2}\right)^{\frac{d \times m}{2}}\exp\left(-\frac{\varepsilon^2}{2\kappa^2}\right)\left[ \frac{1}{2}\left(\theta_3\left(0,\exp\left(-\frac{9\varepsilon^2}{2\kappa^2}\right)\right)+1\right)\right]^{d \times m}.
    \end{align*}
\end{proof}

As an illustration, let us consider classifying a toy dataset with a two-layered neural network and construct the lower bound of the empirical Rashomon ratio according to formula \ref{lower_bound_formula}. We took the Iris dataset restricted to the first two classes (Setosa, Versicolor). For this dataset we computed its Gram matrix $\bs{H}^{\infty}$ (after normalizing all of the entries so that $\|\bs{x}\|=1$) and an approximation of $\varepsilon$ (based on the dominant term $\sqrt{\bs{y}^T(\bs{H}^{\infty})^{-1}\bs{y}}$ that is related to the complexity of the dataset). We assumed that a neural network with $m=4$ nodes in its hidden layer was applied to classify this dataset. In Figure \ref{lower_bound_kappa} we plotted the curves that illustrate how Formula (\ref{lower_bound_formula}) changes with parameter $\kappa$ that stands for the magnitude of the noise at the initialization of neural network's parameters $\bs{W}$. We plot three curves, each one corresponds to a different level of $\gamma$ in the Rashomon ratio $\mc{R}_{\text{ratio}}(\gamma, \mc{F})$. Blue dotted curve represents $\gamma=0.10$, orange curve with triangles stands for $\gamma=0.11$ and green curve with pentagons is for $\gamma=0.12$. As expected, the larger the $\gamma$, the larger is the value for the Rashomon ratio. The order of the lower bound for the Rashomon ratio measured here is $10^{-10}$, which is quite large when we consider the results from \cite{semenova_rudin}. The large Rashomon ratios from \cite{semenova_rudin} were of order $10^{-39}$ or $10^{-40}$, while the small Rashomon ratios were of order $10^{-42}$ or less. The setup of their measurements is very different though: they consider real datasets, while we have only a toy example and their family of classifiers covers different methods such as logistic regression, CART or random forests (estimated by an approximating set of decision trees of bounded depth), while ours is restricted to a small neural network where we vary the parameters. Also our $\gamma$ is set to $0.1$, while theirs is $0.05$. On the other hand we provide a lower bound for the Rashomon set in Figure \ref{lower_bound_kappa}, while in \cite{semenova_rudin} authors give an estimate for the Rashomon ratio itself. In all three curves we can see a growing trend in the Rashomon ratio with respect to $\kappa$. In \cite{semenova2023a} the authors show that more noise in the data lead to a bigger Rashomon ratio (for example for ridge regression). The results in Figure \ref{lower_bound_kappa} suggest that adding noise to the probability distribution on the parameter family $\bs{W}$ (in our case it corresponds to a larger $\kappa$) may also lead to a larger Rashomon ratio.  

 \begin{figure}[h]
 \centering
    \includegraphics[width=8cm]{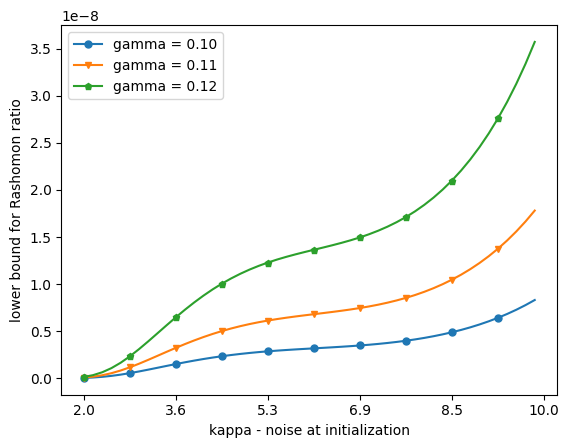}
    \caption{Lower bound for the Rashomon ratio that approximates the formula (\ref{lower_bound_formula}). Parameter $\kappa$ that corresponds to the magnitude of the noise at initialization varies between $2$ and $10$. Parameter $\varepsilon$ is equal to $7.13$ - that is the value of the dominant term in formula (\ref{epsilon_formula}) for the Iris dataset (where only two first classes are taken into consideration). The dimensionality of data in this dataset is $d=4$ and we assume that the neural network used has $m=4$ nodes in its hidden layer. Probability of failure is set to $\delta=0.1$. We consider three different values for the parameter $\gamma$: blue line with dots corresponds to $\gamma=0.10$, orange line with triangles corresponds to $\gamma=0.11$ and green line with pentagons corresponds to $\gamma=0.12$.}
    \label{lower_bound_kappa}
    \end{figure}

\section{Advantage of a large Rashomon ratio for infinite classifier families}
\label{section:advantage}

Assuming that we have a set of classifiers $\mathcal{F}_2$ with a large Rashomon set $\mathcal{R}_{\text{set}}(\mathcal{F}_2,\gamma)$ we would like to explore what happens to the training and population loss when we restrict ourselves to a smaller set of classifiers $\mathcal{F}_1 \subset \mathcal{F}_2$. Specifically, we are interested in a set $\mathcal{F}_1$ chosen from $\mathcal{F}_2$ at random according to a distribution with probability measure $\rho$ on $\mathcal{F}_2$. To study this question we first propose an extension of Theorem 5 from \cite{semenova_rudin}
to the case where $\mathcal{F}_1$ and $\mathcal{F}_2$ are infinite sets of functions. This way we can apply these results, for example, to the set $\mathcal{F}_2$ of affine classifiers and a subset $\mathcal{F}_1$ of randomly chosen affine classifiers.  
Theorem \ref{theorem_5} together with Theorem \ref{theorem_6} guarantee that, given a large enough Rashomon ratio, the empirical loss of the random subfamily is close to the population loss of the entire family. At the same time, its generalization properties are better. 

We will need the following Lemma to prove Theorem \ref{theorem_5}. It is called Proposition 13 in \cite{semenova_rudin} and it shows that if a function belongs to the empirical anchored Rashomon set with large probability it also belongs to the true anchored Rashomon set:

\begin{lemma}\label{true_is_close_to_empirical}
    For a loss $l$ bounded by $b$ and for any $\varepsilon, \eta>0$, for a fixed $f\in \mc{F}$ we have
    \begin{align}
        P\left[f \in \hat{\mc{R}}_{\text{set}}^{\text{anc}}(\mc{F},\eta+\varepsilon) \mid f \in {\mc{R}}_{\text{set}}^{\text{anc}}(\mc{F},\eta) \right] \geq 1-\exp(-2n(\varepsilon/b)^2),
    \end{align}
    where probability is taken with respect to the random draw of training data. 
\end{lemma}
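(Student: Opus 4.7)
The plan is to reduce the claim directly to a one-sided Hoeffding inequality, exploiting the fact that $f$ is fixed and that membership in the \emph{true} anchored Rashomon set is a deterministic event (it does not depend on the training sample).

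First I would observe that for a fixed $f \in \mc{F}$, the quantity $L(f)$ is a deterministic number, so the conditioning event $\{f \in \mc{R}_{\text{set}}^{\text{anc}}(\mc{F}, \eta)\} = \{L(f) \leq \eta\}$ is either true or false and is independent of the randomness in the training data $\mc{S}$. Consequently, conditioning on it does not affect the distribution of $\hat{L}(f) = \frac{1}{n}\sum_{i=1}^{n} l(f(\bs{x}_i), y_i)$. If the conditioning event fails the statement is vacuous, so we may assume $L(f) \leq \eta$.

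Next I would apply Hoeffding's inequality in its one-sided form to the i.i.d.\ random variables $Z_i := l(f(\bs{x}_i), y_i)$, which take values in $[0, b]$ and satisfy $E[Z_i] = L(f)$. This yields
\begin{align*}
P\bigl(\hat{L}(f) - L(f) \geq \varepsilon\bigr) \leq \exp\!\left(-2n(\varepsilon/b)^2\right).
\end{align*}
Equivalently, with probability at least $1 - \exp(-2n(\varepsilon/b)^2)$, we have $\hat{L}(f) \leq L(f) + \varepsilon$.

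Finally, on this high-probability event, combining with the conditioning hypothesis $L(f) \leq \eta$ gives $\hat{L}(f) \leq \eta + \varepsilon$, i.e., $f \in \hat{\mc{R}}_{\text{set}}^{\text{anc}}(\mc{F}, \eta + \varepsilon)$, which is exactly the conclusion. There is no real obstacle here: the only subtle point is recognizing that the conditioning is on a data-independent event, so no elaborate conditional argument (or change of measure) is required; it is a clean one-line application of Hoeffding after that observation.
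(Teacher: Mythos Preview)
Your proof is correct and follows essentially the same approach as the paper: apply one-sided Hoeffding to the bounded i.i.d.\ losses $l(f(\bs{x}_i),y_i)$ and combine the resulting bound $\hat{L}(f)\leq L(f)+\varepsilon$ with the hypothesis $L(f)\leq\eta$. Your explicit remark that the conditioning event is deterministic (so conditioning is trivial) is a welcome clarification, and your inequality direction $P(\hat{L}(f)-L(f)\geq\varepsilon)$ is in fact the correct one needed for the conclusion.
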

\begin{proof}
    By Hoeffding's inequality for a fixed $f \in \mc{R}_{\text{set}}^{\text{anc}}(\mc{F}, \eta)$ we get that 
    \begin{align*}
        P \left(L(f)-\hat{L}(f)>\varepsilon\right) \leq \exp(-2n(\varepsilon/b)^2). 
    \end{align*}
    Since  $f \in \mc{R}_{\text{set}}^{\text{anc}}(\mc{F}, \eta)$, we know that $L(f) \leq \eta$, therefore $\hat{L}(f) \leq \eta+\varepsilon$, which is equivalent to $f \in \hat{\mc{R}}_{\text{set}}^{\text{anc}}(\mc{F}, \eta+\varepsilon)$ with probability at least $1-\exp(-2n(\varepsilon/b)^2)$.
\end{proof}

\begin{theorem}\label{theorem_5}
    Let us consider two hypothesis spaces such that $\mc{F}_1 \subset \mc{F}_2$. Let us also consider a classification problem with a loss function $l$ bounded by $b$. Let us assume that the true Rashomon set of $\mc{F}_2$ intersects $\mathcal{F}_1$, i.e.~there exists a model $\tilde{f_1}\in \mathcal{F}_1$ such that $\tilde{f}_1 \in \mc{R}_{set}(\mathcal{F}_2, \gamma)$. Then for any $\delta>0$, with probability at least $1-\delta$ with respect to the random draw of the data, we have
    \begin{align}
        \inf_{f_2 \in \mc{F}_2}L(f_{2})-\Delta \leq \inf_{f_1 \in \mc{F}_1}\hat{L}({f}_1) \leq \inf_{f_2\in \mc{F}_2} L(f_2)+\gamma+b\sqrt{\frac{\ln1/\delta}{2n}},
    \end{align}
where $\Delta$ is either
\begin{itemize}
\item[(i)] 
\begin{align*}
    \Delta = b\sqrt{\frac{1}{n}\left(d_{VC}\left(\ln\frac{2n}{d_{VC}}+1\right)-\ln\left(\dfrac{\delta}{4}\right)\right)},
\end{align*} 
if the smaller set $\mathcal{F}_1$ has a finite VC dimension $d_{VC}=d_{VC}(\mathcal{F}_1)$ or
\item[(ii)] 
\begin{align}
    \Delta = \sqrt{\frac{8}{n}\ln\left(\frac{4m_{\mathcal{F}_1}(2n)}{\delta}\right)},
\end{align}
when the loss $l$ is the 0-1 loss  and the smaller set has a growth function $m_{\mathcal{F}_1}(n)$ (which can be substantially smaller than that of $\mathcal{F}_2$).
\end{itemize}
\end{theorem}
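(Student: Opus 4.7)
The plan is to prove the two inequalities separately, handling the upper bound via single-function concentration applied to the witness $\tilde{f}_1$ and the lower bound via a uniform convergence bound over the (smaller) class $\mathcal{F}_1$; the form of $\Delta$ in cases (i) and (ii) differs only because two different generalization inequalities are invoked. The two bounds are then joined by a union bound (splitting the failure probability across the two sides).

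For the upper bound, by hypothesis there exists $\tilde{f}_1 \in \mathcal{F}_1$ with $L(\tilde{f}_1) \leq \inf_{f_2 \in \mathcal{F}_2} L(f_2) + \gamma$. Since $\tilde{f}_1$ is fixed before the data is drawn, Hoeffding's inequality applied to the single function $\tilde{f}_1$ (equivalently, Lemma~\ref{true_is_close_to_empirical} with $\eta = \inf_{f_2 \in \mathcal{F}_2} L(f_2)+\gamma$) gives, with probability at least $1-\delta$,
\begin{align*}
\hat{L}(\tilde{f}_1) \leq L(\tilde{f}_1) + b\sqrt{\frac{\ln(1/\delta)}{2n}} \leq \inf_{f_2 \in \mathcal{F}_2} L(f_2) + \gamma + b\sqrt{\frac{\ln(1/\delta)}{2n}}.
\end{align*}
Since $\inf_{f_1 \in \mathcal{F}_1} \hat{L}(f_1) \leq \hat{L}(\tilde{f}_1)$, the right-hand inequality of the theorem follows immediately.

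For the lower bound, let $f_1^{\star} \in \arg\min_{f \in \mathcal{F}_1} \hat{L}(f)$. Because $\mathcal{F}_1 \subset \mathcal{F}_2$, we have $\inf_{f_2 \in \mathcal{F}_2} L(f_2) \leq L(f_1^{\star})$, so it suffices to control $L(f_1^{\star}) - \hat{L}(f_1^{\star})$, and this is where uniform convergence over $\mathcal{F}_1$ enters. In case (i), the standard VC-dimension generalization bound (Sauer--Shelah combined with Vapnik's inequality) yields, with probability at least $1-\delta$,
\begin{align*}
\sup_{f \in \mathcal{F}_1}\bigl(L(f) - \hat{L}(f)\bigr) \leq b\sqrt{\frac{1}{n}\left(d_{VC}\left(\ln\frac{2n}{d_{VC}}+1\right) - \ln\frac{\delta}{4}\right)} = \Delta,
\end{align*}
whereas in case (ii) the classical Vapnik--Chervonenkis uniform deviation bound for the 0-1 loss expressed through the growth function gives $\Delta = \sqrt{(8/n)\ln(4 m_{\mathcal{F}_1}(2n)/\delta)}$. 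Applying either bound to the data-dependent minimizer $f_1^{\star}$ then gives
\begin{align*}
\inf_{f_1 \in \mathcal{F}_1} \hat{L}(f_1) = \hat{L}(f_1^{\star}) \geq L(f_1^{\star}) - \Delta \geq \inf_{f_2 \in \mathcal{F}_2} L(f_2) - \Delta.
\end{align*}

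I do not expect a serious obstacle: the upper side is just single-function Hoeffding, which benefits from the fact that $\tilde{f}_1$ is fixed \emph{a priori}, while the lower side is the off-the-shelf VC or growth-function bound. The Rashomon hypothesis enters solely to provide the witness $\tilde{f}_1$ that makes the upper bound scale with $\gamma$ rather than with an uncontrolled worst-case quantity over $\mathcal{F}_1$. The mildest technical nuisance is bookkeeping the failure probability: a clean version of the statement would split $\delta$ into $\delta/2$ on each side and absorb the resulting constants, which is consistent with the form as stated.
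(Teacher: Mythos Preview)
Your proposal is correct and follows essentially the same approach as the paper: Hoeffding's inequality (via Lemma~\ref{true_is_close_to_empirical}) applied to the fixed witness $\tilde{f}_1$ for the upper bound, and a uniform VC/growth-function deviation bound over $\mathcal{F}_1$ for the lower bound. The paper's version is marginally more convoluted in the upper-bound step (it passes through an auxiliary $f_1^\star$ before applying the lemma), and like yours it does not carefully split the failure probability between the two sides.
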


That means that we can bound the empirical loss within $\mathcal{F}_1$ with the population loss within $\mathcal{F}_2$ even when $\mathcal{F}_1$ and $\mathcal{F}_2$ are infinite,
so long as $\mathcal{F}_1$ has either a finite VC dimension or growth function expressible in a closed form. The proof differs from that of \cite{semenova_rudin} in that it replaces the union bound by either a bound using the growth function of $\mathcal{F}_1$ or its VC dimension.
\begin{proof}
Let us consider the lower and upper bounds separately:
\begin{itemize}
\item[(i)] (Lower bound) 
For the case of a 0-1 loss, it is known (e.g., see \cite{AbuMostafa2012LearningFD}) that,
with probability at least $1-\delta$,
    \begin{align*}
        \sup_{f_1 \in \mathcal{F}_1} \mid L(f_1)-\hat{L}(f_1)\mid  \leq \sqrt{\frac{8}{n}\ln\left(\frac{4m_{\mathcal{F}_1}(2n)}{\delta}\right)}.
    \end{align*}
    For other loss functions $l$, bounded by $b$, according to the case of bounded functions from \cite{Vapnik1998}, with probability at least $1-\delta$
    \begin{align*}
        \sup_{f_1 \in \mathcal{F}_1} \left(L(f_1) - \hat{L}(f_1)\right)\leq 
        \dfrac{b}{2}\sqrt{4\dfrac{d_{VC}\left(\ln\frac{2n}{d_{VC}}+1\right)-\ln\left(\dfrac{\delta}{4}\right)}{n}},
    \end{align*}
    Therefore 
    \begin{align*}
         \inf_{f_2 \in \mc{F}_2}L(f_{2}) \leq L(\hat{f}_1) \leq \inf_{f_1 \in \mc{F}_1}\hat{L}({f}_1) + b\sqrt{\frac{1}{n}\left(d_{VC}\left(\ln\frac{2n}{d_{VC}}+1\right)-\ln\left(\dfrac{\delta}{4}\right)\right)}
    \end{align*}
    or using a growth function for 0-1 loss:
    \begin{align*}
      \inf_{f_2 \in \mc{F}_2}L(f_{2}) \leq L(\hat{f}_1) \leq \inf_{f_1 \in \mc{F}_1}\hat{L}({f}_1) + \sqrt{\frac{8}{n}\ln\left(\frac{4m_{\mathcal{F}_1}(2n)}{\delta}\right)}. 
    \end{align*}
    \item[(ii)] (Upper bound) This part of the proof holds the same for infinite $\mathcal{F}_1$ and $\mathcal{F}_2$ as for their finite counterparts \cite{semenova_rudin}, except that the concept of minimum needs to be replaced by infimum. We assume that there exists $\tilde{f}_1\in\mathcal{F}_1 \cap \mathcal{R}_{\text{set}}(\mathcal{F}_2, \gamma)$, therefore $L(\tilde{f}_1) \leq  \inf_{f_2 \in \mc{F}_2}L(f_{2})+\gamma$. 
    Since $ \inf_{f_1 \in \mc{F}_1}L(f_{1}) \leq L(\tilde{f}_1)$, there exists a function $f_1^{\star}\in \mc{F}_1$ for which we have that $L(f_1^{\star}) \leq L(\tilde{f}_1)$, therefore $f_1^{\star}\in \mathcal{R}_{\text{set}}(\mathcal{F}_2, \gamma)$ and $f_1^{\star}\in \mathcal{R}_{\text{set}}^{\text{anc}}(\mathcal{F}_2, \eta)$, where $\eta =\inf_{f_2 \in \mc{F}_2}L(f_{2})+\gamma$. According to Lemma \ref{true_is_close_to_empirical} we have that for every $\delta_1 >0, \; f_1^{\star} \in \hat{\mathcal{R}}_{\text{set}}^{\text{anc}}(\mathcal{F}_2, \eta + \delta_1)$ with probability at least $1-e^{-2n(\delta_1/b)^2}$.  Therefore, with the same probability, $\hat{L}(f_1^{\star})\leq \inf_{f_2 \in \mc{F}_2}L(f_{2})+\gamma +\delta_1$. If we set $\delta = e^{-2n(\delta_1/b)^2}$ we get that with probability $1-\delta$ it holds that $\hat{L}(f_1^{\star})\leq \inf_{f_2 \in \mc{F}_2}L(f_{2})+\gamma+b\sqrt{\frac{\ln(1/\delta)}{2n}}$. By definition of 
    \begin{align*}
        \inf_{f_1 \in \mc{F}_1}\hat{L}({f}_1) \leq \hat{L}(f_{1}^{\star}) \leq \inf_{f_2 \in \mc{F}_2}L(f_{2})+\gamma+b\sqrt{\frac{\ln(1/\delta)}{2n}}.
    \end{align*}
\end{itemize}
Combining part (i) and part (ii), we get the following inequalities:
\begin{align*}
        \inf_{f_2 \in \mc{F}_2}L(f_{2})-b\sqrt{\frac{1}{n}\left(d_{VC}\left(\ln\frac{2n}{d_{VC}}+1\right)-\ln\left(\dfrac{\delta}{4}\right)\right)} \leq \inf_{f_2 \in \mc{F}_2}L(f_{2})+\gamma+b\sqrt{\frac{\ln1/\delta}{2n}},
\end{align*}
or
\begin{align*}
    \inf_{f_2 \in \mc{F}_2}L(f_{2}))- \sqrt{\frac{8}{n}\ln\left(\frac{4m_{\mathcal{F}_1}(2n)}{\delta}\right)} \leq \inf_{f_1 \in \mc{F}_1}\hat{L}({f}_1) \leq \inf_{f_2 \in \mc{F}_2}L(f_{2})+\gamma+b\sqrt{\frac{\ln1/\delta}{2n}},
\end{align*}
for a 0-1 loss $l$. 
\end{proof}

\begin{theorem}\label{theorem_6}[$\mc{F}_1$ as a random subset of $\mc{F}_2$ under large Rashomon ratio condition] Consider a finite hypothesis space $\mathcal{F}_1$ and a finite or infinite hypothesis space $\mathcal{F}_2$ equipped with a distribution characterized by a probability measure $\rho$, such that $\mathcal{F}_1 \subset \mathcal{F}_2$ and each function in $\mathcal{F}_1$ is drawn independently from $\mathcal{F}_2$ according to $\rho$. For loss $l$ bounded by $b$ and true Rashomon ratio with respect to $\rho$:
\begin{align*}
    \mathcal{R}_{\text{ratio}}(\mc{F}_2, \gamma) = \int_{\mathcal{F}_2}\mathbbm{1}(L(f)\leq \gamma)\rho(df)
\end{align*}
and for any $\delta>0$ we have that if Rashomon ratio is at least
\begin{align*}
    \mathcal{R}_{\text{ratio}}(\mathcal{F}_2, \gamma) \geq 1-\delta^{\frac{1}{\mid \mathcal{F}_1\mid }},
\end{align*}
with probability at least $(1-\delta)$ with respect to the random draw of functions from $\mathcal{F}_2$ to form $\mathcal{F}_1$, the assumptions of Theorem \ref{theorem_5} hold. 
\end{theorem}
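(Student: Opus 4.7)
The plan is to reduce the statement to a direct calculation on independent Bernoulli trials. The single required conclusion is that, under the stated lower bound on $\mathcal{R}_{\text{ratio}}(\mathcal{F}_2,\gamma)$, the event ``$\mathcal{F}_1\cap \mathcal{R}_{\text{set}}(\mathcal{F}_2,\gamma)\neq\emptyset$'' holds with probability at least $1-\delta$ over the random draw of $\mathcal{F}_1$. Once this is established, the hypothesis of Theorem \ref{theorem_5} is satisfied and there is nothing further to prove.

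First I would unpack the sampling model. By assumption, the functions $f_1,\ldots,f_{|\mathcal{F}_1|}$ constituting $\mathcal{F}_1$ are drawn i.i.d.\ from $\mathcal{F}_2$ according to $\rho$. For each $i$, set $X_i=\mathbbm{1}(f_i\in\mathcal{R}_{\text{set}}(\mathcal{F}_2,\gamma))$. By the definition of the true Rashomon ratio (Definition \ref{rashomon_ratio}),
\begin{equation*}
P(X_i=1)=\int_{\mathcal{F}_2}\mathbbm{1}(f\in\mathcal{R}_{\text{set}}(\mathcal{F}_2,\gamma))\,\rho(df)=\mathcal{R}_{\text{ratio}}(\mathcal{F}_2,\gamma),
\end{equation*}
so each $X_i$ is Bernoulli with success probability $\mathcal{R}_{\text{ratio}}(\mathcal{F}_2,\gamma)$, and the $X_i$'s are independent.

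Next, I would compute the probability of the ``bad'' event, namely that none of the sampled functions lies in the Rashomon set. By independence,
\begin{equation*}
P\bigl(\mathcal{F}_1\cap\mathcal{R}_{\text{set}}(\mathcal{F}_2,\gamma)=\emptyset\bigr)=\prod_{i=1}^{|\mathcal{F}_1|}P(X_i=0)=\bigl(1-\mathcal{R}_{\text{ratio}}(\mathcal{F}_2,\gamma)\bigr)^{|\mathcal{F}_1|}.
\end{equation*}
Plugging in the assumed bound $\mathcal{R}_{\text{ratio}}(\mathcal{F}_2,\gamma)\ge 1-\delta^{1/|\mathcal{F}_1|}$ yields $1-\mathcal{R}_{\text{ratio}}(\mathcal{F}_2,\gamma)\le\delta^{1/|\mathcal{F}_1|}$, hence the bad event has probability at most $\delta$. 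Taking the complement, with probability at least $1-\delta$ there exists $\tilde f_1\in\mathcal{F}_1$ with $\tilde f_1\in\mathcal{R}_{\text{set}}(\mathcal{F}_2,\gamma)$, which is exactly the hypothesis of Theorem \ref{theorem_5}.

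There is no real obstacle here; the statement is essentially a one-line independence argument followed by inverting $(1-p)^{|\mathcal{F}_1|}\le\delta$. The only subtlety worth flagging is that the Rashomon set in question is the \emph{true} (not empirical) one, consistent with the definition used in Theorem \ref{theorem_5}, so no appeal to Lemma \ref{true_is_close_to_empirical} or Hoeffding-type concentration over the data sample is needed at this stage: the probability is purely over the random draw of classifiers from $\rho$, and the data-dependent randomness is absorbed later through Theorem \ref{theorem_5} itself.
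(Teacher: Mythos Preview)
Your proposal is correct and follows essentially the same approach as the paper: both compute the probability that none of the $|\mathcal{F}_1|$ i.i.d.\ draws land in the Rashomon set as $(1-\mathcal{R}_{\text{ratio}})^{|\mathcal{F}_1|}$ and then invert the inequality $(1-\mathcal{R}_{\text{ratio}})^{|\mathcal{F}_1|}\le\delta$. Your version is simply more explicit about the Bernoulli structure and about which Rashomon set (true, matching Theorem \ref{theorem_5}) is being used.
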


\begin{proof}
The probability that the Rashomon set of $\mc{F}_2$ does not include any of the functions from $\mc{F}_1$ - an independent random draw of $\mid \mc{F}_1\mid $ functions according to $\rho$ equals
\begin{align*}
    (1-\mc{R}_{\text{ratio}})^{\mid \mc{F}_1\mid }.
\end{align*}
We want this probability to be less than $\delta>0$, which is equivalent with
\begin{align}\label{inequality}
\begin{split}
    (1-\mc{R}_{\text{ratio}})^{\mid \mc{F}_1\mid }\leq \delta\\
     \mc{R}_{\text{ratio}} \geq 1 -  \delta^{\frac{1}{\mid \mc{F}_1\mid }},
\end{split}
\end{align}
therefore if Rashomon ratio is large enough, with probability at least $1-\delta$, one of the functions from $\mathcal{F}_1$ will lay inside the the Rashomon set.  
\end{proof}

\begin{corollary}
Equivalently, we can rephrase Theorem \ref{theorem_6} as either (i) or (ii) below: 
    \begin{itemize}
    \item[(i)] 
    if the subset $\mathcal{F}_1$ is large enough: if 
    \begin{align*}
    \mid \mathcal{F}_1\mid  > \frac{\ln(\delta)}{\ln(1-\mathcal{R}_{\text{ratio}}(\mathcal{F}_2,\gamma))},
    \end{align*}
    then the assumptions of Theorem \ref{theorem_5} hold with probability at least $1-\delta$.
    \item[(ii)] If the probability that the assumptions of Theorem \ref{theorem_5} are satisfied is large when $\mid \mathcal{F}_1\mid $ is relatively small, the Rashomon ratio must be large, i.e.
    \begin{align*}
        \mathcal{R}_{\text{ratio}} \geq 1-\delta^{\frac{1}{\mid \mathcal{F}_1\mid }}
    \end{align*}
\end{itemize}
\end{corollary}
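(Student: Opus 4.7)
The plan is to obtain both parts as direct algebraic rearrangements of the inequality $\mathcal{R}_{\text{ratio}}(\mathcal{F}_2, \gamma) \geq 1 - \delta^{1/|\mathcal{F}_1|}$ appearing in Theorem \ref{theorem_6}, since the probabilistic content is already carried by that theorem.

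For part (i), I would start from the sufficient condition of Theorem \ref{theorem_6}, namely $1 - \mathcal{R}_{\text{ratio}}(\mathcal{F}_2,\gamma) \leq \delta^{1/|\mathcal{F}_1|}$, and solve for $|\mathcal{F}_1|$. Taking natural logarithms yields
\[
\ln\bigl(1 - \mathcal{R}_{\text{ratio}}(\mathcal{F}_2,\gamma)\bigr) \leq \frac{\ln \delta}{|\mathcal{F}_1|}.
\]
Since $\delta \in (0,1)$ we have $\ln \delta < 0$, and assuming $\mathcal{R}_{\text{ratio}}(\mathcal{F}_2,\gamma) \in (0,1)$ we also have $\ln(1-\mathcal{R}_{\text{ratio}}(\mathcal{F}_2,\gamma)) < 0$, so dividing by this negative quantity flips the inequality and gives exactly
\[
|\mathcal{F}_1| \geq \frac{\ln \delta}{\ln\bigl(1 - \mathcal{R}_{\text{ratio}}(\mathcal{F}_2,\gamma)\bigr)},
\]
as claimed. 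I would briefly note the degenerate cases ($\mathcal{R}_{\text{ratio}} = 1$ makes the conclusion trivial; $\mathcal{R}_{\text{ratio}} = 0$ makes no sample size sufficient).

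For part (ii), the statement is essentially the contrapositive view of Theorem \ref{theorem_6}: if we want the failure probability to be at most $\delta$ using a small $|\mathcal{F}_1|$, then the same inequality $(1-\mathcal{R}_{\text{ratio}})^{|\mathcal{F}_1|} \leq \delta$ derived in the proof of Theorem \ref{theorem_6} must hold, which rearranges to $\mathcal{R}_{\text{ratio}} \geq 1 - \delta^{1/|\mathcal{F}_1|}$. So (ii) is a direct restatement that requires no additional argument beyond pointing to the same inequality.

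I do not expect any genuine obstacle here; the only subtlety is being careful with the sign of the logarithms when rearranging, and ensuring the corollary makes sense in the boundary cases. No probabilistic tools beyond those already invoked in Theorem \ref{theorem_6} are needed.
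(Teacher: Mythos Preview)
Your proposal is correct and matches the paper's own approach exactly: the paper simply states that both (i) and (ii) follow from rearranging the inequality $(1-\mathcal{R}_{\text{ratio}})^{|\mathcal{F}_1|}\leq \delta$ established in the proof of Theorem~\ref{theorem_6}. Your more careful treatment of the sign of the logarithms and the boundary cases is a welcome elaboration, but the underlying idea is identical.
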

\begin{proof}
    Both (i) and (ii) can be shown using inequality \ref{inequality}.
\end{proof}

If the smaller family $\mc{F}_1$ is infinite we can still use Theorem \ref{theorem_6} as long as we can draw a subset of $\mc{F}_1$ of a certain size $N$ independently. We can apply this idea to the 
thresholding after random projection (TARP) from \cite{boutin_coupkova}.
In this method, a classifier is chosen at random by generating a finite number of projection directions. For each projection,  the threshold that yields the best classification is chosen. The best classifier among this finite number of thresholding after random projection is then selected. Thus, in this case, the restricted family of classifiers considered is not a random subset of the entire family, as the threshold is chosen by optimization rather than random. However, one can view  ${\mathcal F}_1$ as the infinite family consisting of the union of all possible thresholds for the finitely many randomly chosen projection directions. 
This results in an infinite set of classifiers. More precisely, 
the family ${\mathcal F}_2$ is the set of affine classifiers (potentially after extending the data to a higher-dimensional space). However, we can choose a subset of independent functions from it by choosing the threshold at random as well. Then we get $\mc{F}_1 \supset \bar{\mc{F}_1} = \{y = \bs{a}_i\bs{x}+t, \; \bs{a}_i \in \{\bs{a}_1,...,\bs{a}_N\},\; t\}$, where $\{\bs{a}_1,...,\bs{a}_N\}, t$ are fixed and randomly chosen and therefore $\mid \bar{\mc{F}}_1\mid  = N$. If we know that the intersection between the chosen models and the Rashomon set of all affine models is nonempty, we can obtain the following bounds for the method of thresholding after random projection:
\begin{align*}
      \inf_{f_2 \in \mc{F}_2}L(f_{2})- \sqrt{\frac{8}{n}\ln\left(\frac{16nN}{\delta}\right)} \leq \text{empirical loss of TARP} \leq \inf_{f_2 \in \mc{F}_2}L(f_{2})+\gamma+\sqrt{\frac{\ln1/\delta}{2n}}.
\end{align*}

\subsection{Application of the Theorems to the experimental results}
\begin{itemize}
    \item {Example from Section \ref{affine_gauss}}:
Consider a specific example from Figure \ref{one_dim_circle} part a) of affine classification of one-dimensional mixture of Gaussians, where $\sigma=1$, $2\mu=5$. According to our experiments approximately $58\%$ of the classifiers belong to the Rashomon set with $\gamma=0.05$. According to Lemma \ref{ratio_hoeffding} with probability at least $98\%$ the true Rashomon ratio does not differ from the estimated one by more than $0.05$ since we use $N=1000$ functions for the estimation. Therefore
\begin{align}\label{part_1_application}
    P\left(\mc{R}_{\text{ratio}}(\mc{F}_{\text{af}, 0.05})\geq 0.58-0.05 = 0.53\right) \geq 0.98
\end{align}
We can apply Theorem \ref{theorem_6} and see that with probability $99\%$ (given by $\delta=0.01$) if 

\begin{align}\label{part_2_application}
N = \mid\mathcal{F}_1\mid \geq \frac{\ln(\delta)}{\ln(1-\mc{R}_{ratio}(\mc{F}_2,\gamma))}
\end{align}

then in the smaller set of classifiers $\mathcal{F}_{1}$ there is at least one function that belongs to the Rashomon set of the larger family $\mathcal{F}_2$, which is all affine classifiers in this particular case.
We can combine \ref{part_1_application} and \ref{part_2_application} to get that with probability at least $0.98*0.99 = 0.97$ if we choose $N$ large enough:
\begin{align*}
    N \geq  \frac{\ln(0.01)}{\ln(1-0.53)} \approx 6.1 \geq \frac{\ln(\delta)}{\ln(1-\mc{R}_{ratio}(\mc{F}_2,\gamma))}
\end{align*}

Theorem \ref{theorem_5} applies to the smaller set $\mathcal{F}_1$ and we can achieve a good training error while having a small generalization gap (the left hand size is based on $\mid \mc{F}_1\mid=N$):
\begin{align}
     \inf_{f_2 \in \mc{F}_2}L(f_{2})- \sqrt{\frac{1}{2n}\ln\left(\frac{2N}{\delta}\right)} \leq \inf_{f_1 \in \mathcal{F}_1}\hat{L}(f_1) \leq \inf_{f_2 \in \mc{F}_2}L(f_{2})+\gamma+\sqrt{\frac{1}{2n}\ln\left(\frac{1}{\delta}\right)},
\end{align}
where $\inf_{f_2 \in \mc{F}_2}L(f_{2})$ is the Bayes error for the mixture of Gaussian distributions as in \ref{bayes_error_one_dim}.

\item{Example from Section \ref{twolayernn}}: we can not apply Theorem \ref{theorem_5} directly to the second case that we consider in this work (a two-layer neural network used to classify the Iris dataset), because in Section \ref{twolayernn} we lower bound the empirical anchored Rashomon ratio and not the true Rashomon ratio. We can still use Theorem \ref{theorem_6} to compute the lower bound on the probability that we pick at random a classifier that belongs to the empirical anchored Rashomon set. The lower bound for the empirical anchored Rashomon ratio for $\gamma=0.11$ for two-layered Neural network with $\kappa$ approximately equal to $5$ applied to the Iris dataset is close to $0.5\times 10^{-8}$ (see Figure \ref{lower_bound_kappa}), therefore according to Theorem \ref{theorem_6} with probability $0.9$ if we have 
 \begin{align*}
N > \mid\mathcal{F}_1\mid \geq \frac{\ln(\delta)}{\ln(1-\mc{R}_{ratio}(\mc{F}_2,\gamma))} = 4.6 \times 10^{8}
\end{align*}
then in the smaller set of classifiers $\mathcal{F}_{1}$ there is at least one function that belongs to the empirical anchored Rashomon set of the larger family $\mathcal{F}_2$. Therefore if we apply ERM on the smaller set $\mc{F}_1$ we have that with probability at least $0.9$:
\begin{align*}
    \inf_{f_1 \in \mc{F}_1} \hat{L}(f_1) \leq \gamma = 0.11.
\end{align*}
\end{itemize}

\section{Conclusion}
Semenova et al.~\cite{semenova_rudin} showed that, for the case of a finite family of classifiers, having a large Rashomon ratio is advantageous because it allows one to pick a classifier from a simpler/smaller subfamily of functions and expect that it will have a good accuracy.  We extended this result to the case of infinite families of classifiers so to be able to apply similar results to more general classifiers, including thresholding after random projection \cite{boutin_coupkova}, random projection ensemble classifiers \cite{cannings2017randomprojection}, and untrained neural networks such as \cite{baek2021face}. Whether considering a finite or infinite family of classifiers, we now have a theoretical basis for solving classification problems with a large Rashomon ratio by picking a classifier among a small set of randomly chosen ones, based on their performance on a training dataset. In general, restricting the search to a small set of classifiers is likely to decrease the generalization gap of the classifier. If the classification problem has a large Rashomon ratio (with respect to the large family), then the theory guarantees that the training accuracy will likely not be reduced too much when picking from the small set of classifiers. 

Knowing this, the question of being able to compute or estimate the Rashomon ratio for specific classification problems becomes even more relevant. We showed that a Monte-Carlo type method can be used to estimate the Rashomon ratio simply by counting randomly chosen functions whose empirical accuracy is above a certain value, and we quantified the error of this estimate. But whether counting samples or computing analytically, a key issue in obtaining the Rashomon ratio for an infinite family of classifiers is the seemingly infinite volume of the classifier family. For affine classifiers, this can be addressed by compactifying the parameter set of the classifier family by mapping it to a sphere. The Rashomon ratio can then be computed by putting a probability measure on the sphere (uniform or not).  More generally, a probability measure $\rho$ with $\int_{{\mathcal F}} \rho(df) =1 $ must be put on the classifier family ${\mathcal F}$, and all results depend on the choice of this measure. Choosing the measure with a specific classification problem in mind can increase the Rashomon ratio and guide the classifier selection method to improve the chance of success. 

We computed the Rashomon ratio for the problem of separating two normally distributed classes in $d$ dimension using an affine classifier. We showed that, by choosing $\rho (f)$ as a Uniform distribution over a sphere whose zero-angle point corresponds to the middle point between the two Gaussian means then the ratio is large and converges to $1$ as the distance between the means grows to infinity. Our results highlight the relationship between the choice of probability measure (indirectly also the map chosen to compactify the classifier family) and the Rashomon ratio.

We also studied the Rashomon ratio for a classification problem with a given Gram matrix $\bs{H}^\infty$ using a randomly initialized two-layer ReLU neural network. We obtained a lower bound on the Rashomon ratio in terms of the dimensionality of the dataset, the number of nodes in the hidden layer of the neural network, and $\bs{y}^T(\bs{H}^{\infty})^{-1}\bs{y}$, where $\bs{y}$ is a vector containing the labels of the points. Random labels yield a smaller lower bound than real labels, which is consistent with the observation that real classification problems tend to be easier to solve than made up ``random" problems. This is also consistent with the findings of \cite{arora2019finegrained}, which showed that the empirical loss of these ReLU neural networks is greater when the data labels are randomly chosen. 

For this former case, we also showed that the Rashomon set is an epsilon cover of the classifier family equipped with a Euclidean metric (on the parameters space). The dominant term of epsilon depends only on $\bs{y}$ and $\bs{H}^{\infty}$, and indicates that a random initialization of the network is expected to be close to the Rashomon set. This gives a geometric interpretation to the fact proved in \cite{arora2019finegrained} that a random initialization of the neural network is expected to be close to the true solution.  It is conceivable that, more generally, the good performance of neural networks can be explained geometrically by the large size of the Rashomon set. In particular, if a certain task can be solved successfully with an untrained neural network (e.g., visual number sense \cite{kim2021visual} or face detection \cite{baek2021face}), this may be due to the large size of the Rashomon set, which could be related to $\bs{y}^T(\bs{H}^{\infty})^{-1}\bs{y}$ being small.

In future research, it would be interesting to study the question of how to construct or choose a probability measure on the classifier family so to end up with a large Rashomon ratio for a given classification problem.  

\bibliographystyle{plain}
\bibliography{bibliography}
\end{document}